\PassOptionsToPackage{inline}{enumitem}
\documentclass[10pt]{article} 

\usepackage[accepted]{rlj} 

\usepackage{amsmath,amsfonts,bm}

\def\eqref#1{equation~\ref{#1}}

\def\1{\bm{1}}

\DeclareMathAlphabet{\mathsfit}{\encodingdefault}{\sfdefault}{m}{sl}
\SetMathAlphabet{\mathsfit}{bold}{\encodingdefault}{\sfdefault}{bx}{n}

\newcommand{\E}{\mathbb{E}}

\newcommand{\R}{\mathbb{R}}

\DeclareMathOperator*{\argmax}{arg\,max}

\usepackage{amssymb}
\usepackage{mathtools}
\usepackage{mathrsfs}

\usepackage{graphicx}
\usepackage{subcaption}
\usepackage[space]{grffile}

\usepackage{booktabs} 
\usepackage{multirow}
\usepackage{amsthm}
\usepackage{todonotes}

\newcommand{\D}{\mathcal{D}}

\newcommand{\sts}{\mathcal{S}} 
\newcommand{\as}{\mathcal{A}} 

\newcommand{\bq}{\mathbf{q}}

\newtheorem{proposition}{Proposition}
\newtheorem{corollary}{Corollary}

\title{Q-based Variational Inverse Reinforcement Learning}
\setrunningtitle{Q-based Variational IRL}

\author{Ondrej Bajgar\textsuperscript{1,2}, Peter Tisnikar\textsuperscript{1}, Alessandro Abate\textsuperscript{1}, Konstantinos Gatsis\textsuperscript{3}, and Maike Osborne\textsuperscript{1}}

\emails{ondrej@bajgar.org}

\affiliations{
$^{1}$\textbf{University of Oxford}\\
$^{2}$\textbf{Artificial Intelligence Center, Czech Technical University in Prague}\\
$^{3}$\textbf{Villanova University}\\
}

\keywords{inverse reinforcement learning, Bayesian IRL, variational inference, uncertainty quantification, apprenticeship learning}

\contribution{
We introduce Q-based Variational IRL (QVIRL), a method for Bayesian IRL that is scalable and provides principled and well-calibrated posterior distribution approximations over both the optimal Q-values as well as the rewards.
}{
The only scalable method for Bayesian IRL, AVRIL \citep{chan2021}), does not provide a full posterior distribution over optimal Q-values (learning only a point estimate of the optimal Q-values), and those that do \citep{ramachandran2007, mandyam2023, bajgar2024} cannot scale to continuous and large state spaces due to relying on Monte Carlo sampling of the reward posterior. We instead turn to explicit optimal Q-value posterior estimation using variational inference, thus creating a method that is both principled and scalable.
}

\contribution{
We provide an extensive analysis of the quality of the recovered posterior in randomized gridworlds, where we can obtain a true posterior over rewards for comparison. We show that QVIRL's Gaussian variational posterior produces a faithful and well-calibrated posterior relative to AVRIL, which exhibits, depending on a hyperparameter value, either variance collapse and over-fitting, or reversion to the prior.
}{Having a faithful posterior approximation enables performance on downstream tasks including risk-averse apprenticeship learning and active learning, which both build on uncertainty quantification.}

\contribution{ 
In apprenticeship learning settings, QVIRL can learn policies that reach or even surpass expert performance, even when given only few expert demonstrations. 
In active learning settings, QVIRL provides a useful knowledge acquisition signal, and drives successful active learning.
}{
While these properties have been available through Bayesian IRL methods based on MCMC sampling \citep{ramachandran2007,bajgar2024}, these were mostly limited to tabular or linear settings. While ValueWalk was applied to Lunar Lander, it can take over 12 hours to train, where QVIRL takes minutes. No prior method for Bayesian IRL is both easily scalable beyond tiny settings and provides high-quality uncertainty estimation.
}

\summary{
The development of safe and beneficial AI requires that systems can learn and act in accordance with human preferences. However, explicitly specifying these preferences by hand is often infeasible. Inverse reinforcement learning (IRL) addresses this challenge by inferring preferences, represented as reward functions, from expert behaviour. We introduce Q-based Variational IRL (QVIRL), a new Bayesian IRL method that recovers a posterior distribution over rewards from expert demonstrations via primarily learning a variational distribution over optimal Q-values. Unlike previous approaches, QVIRL combines scalability with uncertainty quantification, enabling risk-averse planning as well as active learning. We demonstrate QVIRL's strong performance in apprenticeship learning across various tasks, including gridworlds, Lunar Lander, the Highway Environment, and two ATARI games both with static expert data and with active learning. It is the first work in Bayesian IRL that demonstrates training from raw pixel observations.
}

\begin{document}

\maketitle  

\begin{abstract}
The development of safe and beneficial AI requires that systems can learn and act in accordance with human preferences. However, explicitly specifying these preferences by hand is often infeasible. Inverse reinforcement learning (IRL) addresses this challenge by inferring preferences, represented as reward functions, from expert behaviour. We introduce Q-based Variational IRL (QVIRL), a novel Bayesian IRL method that recovers a posterior distribution over rewards from expert demonstrations via primarily learning a variational distribution over optimal Q-values. Unlike previous approaches, QVIRL combines scalability with uncertainty quantification, important for safety-critical applications as well as active learning. We demonstrate QVIRL's strong performance in apprenticeship learning across various tasks, including gridworlds, Lunar Lander, the Highway Environment, and two ATARI games both with static expert data and with active learning. It is the first method for Bayesian IRL that demonstrates training from raw pixel observations.
\end{abstract}

\section{Introduction}

Stuart Russell \citeyearpar{zotero-1078} suggested three principles to guide the development of beneficial AI: AI's only objective is realizing human preferences; AI is initially uncertain about what these preferences are; and the ultimate source of information about the preferences is human behaviour. \emph{Apprenticeship learning}\footnote{We use the term apprenticeship learning to mean a subarea of imitation learning that uses IRL as an intermediate step. Imitation learning is thus a broader term including non-IRL techniques for learning a good policy from demonstrations, such as behavioural cloning.} via Bayesian \emph{inverse reinforcement learning} (IRL) can be seen as operationalizing these principles: Bayesian IRL starts with a prior distribution over reward functions representing initial uncertainty about human preferences.
It then combines this prior with \emph{demonstration} data from a human expert acting approximately optimally with respect to an unknown reward, to produce a posterior distribution over rewards. In apprenticeship learning, this posterior over rewards is then used to produce a policy that should perform well according to the unknown reward function.

Non-Bayesian IRL has been successfully applied in apprenticeship learning in settings including robotics \citep{kretzschmar2016,okal2016,woodworth2018,das2021,liu2022a}, navigation in Google Maps on a global scale \citep{barnes2023}, and autonomous driving \citep{sun2018,rosbach2019,huang2022}, including on vehicles deployed in real-world heavy traffic \citep{phan-minh2022}. However, this body of work that has scaled IRL to real-world settings generally learns only a point estimate of the reward function, which can be problematic for two main reasons: first, the IRL task is generally underspecified -- there exist multiple reward functions that equally well explain given behaviour, even in the limit of many demonstration trajectories \citep{russell1998,cao2021,kim2021,skalse2023}. Second, further uncertainty is induced by working only with a limited set of demonstrations. 

\textit{Bayesian} IRL addresses this by recovering a posterior distribution over reward functions
rather than a point estimate, which is useful to produce policies robust with respect to this posterior uncertainty, or to do active learning and gather more data reducing the uncertainty where it matters. However, prior methods for Bayesian IRL (see Section~\ref{sec:rw}) are generally applicable only to finite state and action spaces, or continuous spaces with only a handful of dimensions \citep{ramachandran2007,mandyam2023,bajgar2024}, thus hindering their applicability to real-world settings. Alternatively, other work \citep{chan2021} forgoes crucial parts of posterior uncertainty quantification in the interest of scalability.

The contribution of this paper is providing a method that preserves both scalability -- in terms of the dimensionality of the state or observation space as well as the amount of demonstration data, which we demonstrate by applying it to higher dimensional settings than any previous work on Bayesian IRL -- and full posterior uncertainty estimation, whose usefulness we demonstrate in active learning. We achieve this by leveraging variational inference (VI), which is notably more computationally efficient than Markov chain Monte Carlo used in most previous work, and by primarily working in the space of optimal Q-values rather than rewards, which avoids the need to repeatedly solve the forward planning problem -- a notorious obstacle in IRL. Applying Gaussian VI faces three obstacles in IRL: firstly, the posterior needs to be consistent under the Bellman equation, but the Gaussian family is not closed under the maximization operation the equation contains. Secondly, the likelihood -- a softmax over jointly Gaussian random variables -- is not known analytically. Finally, the KL divergence to the prior is not easy to evaluate over continuous domains. We offer and test approximations that address these challenges. We see this as a step in scaling Bayesian IRL methods to higher-dimensional settings, broadening the range of tasks that can benefit from the improved robustness brought by posterior uncertainty quantification, and opening the door to further work on scaling, robust apprenticeship learning, and active IRL.


\section{Problem Formulation}
\label{sec:task}

The goal of Bayesian IRL is to recover a posterior distribution over reward functions based on observing a set of demonstrations $\D=\{(\phi(s_1),a_1),...,(\phi(s_n),a_n)\}$ from an expert acting in a 
 Markov decision process (MDP) $\mathcal{M}=\left(\sts, \as, p, r, \gamma, \rho_0\right)$ where $\sts$ and $\as$ are the state and action spaces respectively,
$\phi: \sts \to \Phi$ is a feature function that maps each state to its representation in a feature space $\Phi$, $p: \sts \times \as \to \mathcal P(\sts)$ is the transition function, where $\mathcal P(\sts)$ is a set of probability 
measures over $\sts$, $r: \Phi\times \as \to \mathbb{R}$ is an (expected) reward function\footnote{\label{footnote:stochastic-rewards}Our formulation permits the underlying reward to be stochastic. Our expert model makes decisions using the optimal Q-function, which depends only on the expectation of the reward and therefore the observed demonstrations only give us information about the expectation. Because of this, the learnt reward function can represent either deterministic reward or an expectation of a stochastic one.},
$ \gamma\in (0,1) $ is a discount rate, and
 $\rho_0$ is the initial state distribution. Where there is no risk of confusion, we sometimes write rewards, Q-functions, and policies directly as functions of the state, omitting the feature function.

In IRL, we know all elements of the MDP except for the reward and, possibly, the transition function.
Instead, we have a model of how the expert policy is linked to the reward and, in the case of Bayesian IRL, we also have a prior distribution over reward functions. In this work we will assume that conditional on the optimal Q-values, the actions chosen by the expert are independent, and use the standard Boltzmann rationality model to describe the expert's action likelihood across a set of demonstrations $\D$:
\begin{equation} 
\label{eq:boltzmann-likelihood} p(\D|r) = \prod_{(s_t,a_t)\in\D} \frac{e^{\beta Q^*(\phi(s_t),a_t)}}{\sum_{a'\in\as} e^{\beta Q^*(\phi(s_t),a')}} \end{equation} 
\citep{ramachandran2007,chan2021,bajgar2024}. The optimal Q-value $Q^*(s,a)$ is the expected discounted return if action $a$ is taken in state $s$ and the optimal policy $\pi^*$ is subsequently followed, and $\beta \in (0, \infty)$ is a rationality coefficient.

Given this likelihood together with the prior over rewards $p(r)$, we can calculate the posterior using Bayes' theorem as $p(r|\D) = p(\D|r)p(r)/p(\D)$. The full probability of the demonstrations would involve the transition probabilities and the initial state distribution, but since the same factors appear in both the numerator and the denominator of the posterior, they cancel, so we omit them from the likelihood. Except for a few special cases, we cannot calculate the posterior analytically and need to resort to approximate methods. We propose to use variational inference, in a manner that substantively improves upon the only previous use of variational inference for Bayesian IRL.

We then use this Bayesian IRL method for apprenticeship learning, where the goal is to produce an \emph{apprentice policy} $\pi_{\text{A}}$ that performs well under the unknown reward function (either in terms of expected return, or a risk-averse criterion such as CVaR), first with static data, and then also with active learning, where, in each active step, we select an initial state for the next expert demonstration, so that our posterior uncertainty is maximally reduced, or the apprentice performance maximally improved.

\section{Related Work} \label{sec:rw}
Inverse reinforcement learning (IRL) was introduced by \citet{russell1998}, though essentially the same problem had been formulated and studied before as \emph{inverse optimal control} (\citet{kalman1964};  
the communities studying each of these two formulations have been somewhat separate and using different sets of methods; see \citet{abazar2020} for a comparison). The already vast IRL literature is well reviewed by \citet{arora2021} and \citet{adams2022} -- here we concentrate on approaches closest to ours, i.e. \textit{Bayesian} inverse reinforcement learning methods.

The problem of \textit{Bayesian} IRL was first addressed by \citet{ramachandran2007} using Markov chain Monte Carlo (MCMC) sampling. MCMC can produce samples from the true posterior over rewards, but scales poorly to higher dimensions. Furthermore, Ramachandran's method needs to solve the forward RL problem many times. \citet{michini2012a} improved the efficiency by focusing only on the most relevant parts of the state-action space. \citet{mandyam2023} instead tried to reduce the number of times the forward RL problem needs to get solved by solving it repeatedly with different reward functions and learning a joint density over rewards and demonstrations using kernel density estimation, but this amortization method can still only address very small problems. \citet{bajgar2024} significantly reduced the computational burden by performing inference primarily in the space of optimal Q-values, thus avoiding the need to repeatedly solve the forward RL problem -- a trick we also use. However, the method is still based on MCMC sampling, which is computationally intensive and has trouble scaling to higher-dimensional inference problems.

\citet{chan2021} have applied the much more scalable variational inference to the problem. Their approach avoids having to repeatedly solve the forward RL problem by jointly learning a reward encoder (a network producing a 
mean and variance for the reward in any state) and a Q-network, capturing the current estimate of the optimal Q-function. The two networks are tied together by the Bellman equation applied as a soft constraint. The Q-network is then used to 
produce the apprentice policy. However, since the method models only a point estimate of the Q-function, it does not provide any uncertainty estimation for the apprentice policy. Also, since the reward posterior is tied to the Q-function point estimate and thus a single policy, its posterior variance is greatly reduced relative to the true Bayesian posterior, which needs to account for optimal policy uncertainty. The lack of uncertainty over Q-values and policies also makes the method unusable for existing active IRL methods. By contrast, our method directly models the uncertainty over optimal Q-values which in turn cheaply induces the posterior over rewards, so we keep full uncertainty quantification over both.

\section{Method}
\label{sec:method}

\begin{figure}[t]
    \centering 
    \includegraphics[width=0.9\textwidth, trim={1cm 8cm 2.5cm 1cm},clip]{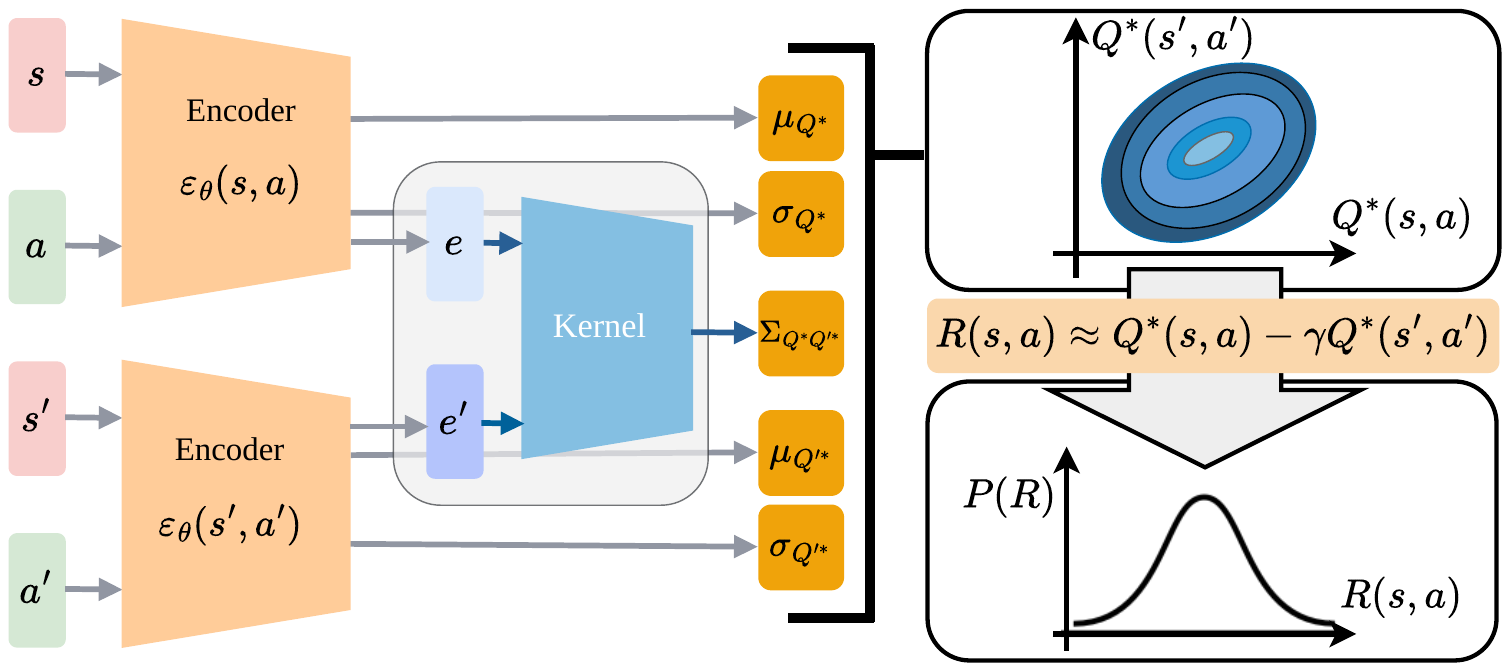} 
    \caption{Illustration of the QVIRL architecture. A state-action pair $(s,a)$ and a successor pair $(s',a')$ are fed into an encoder network that outputs the mean $\mu_{Q^*}$ and the standard deviation $\sigma_{Q^*}$ of the associated optimal Q-value $Q^*(s,a)$, as well as a latent embedding $e$. The embeddings of a set of state-action pairs can be passed to a kernel to produce their correlation matrix and, combined with individual variances, their covariance $\text{Cov}(Q^*(s,a),Q^*(s',a'))$ denoted by $\Sigma_{Q^*Q'^*}$ in the diagram. Together, they define the joint variational distribution of the optimal Q-values of a given set of state-action pairs. This distribution can then be used to deduce the distribution over the corresponding reward through the Bellman equation.} 
    \label{fig:architecture}
\end{figure}

We first provide a high-level summary of our method, \emph{Q-based Variational IRL} (QVIRL), and then proceed by describing each of its components in detail. Our method takes as input a set of demonstrations and a prior over reward functions, and outputs a variational approximation to the posterior over optimal Q-functions. The posterior can then directly be used to deduce a variational distribution over reward functions by being transformed through the inverse (optimal) Bellman operator. The architecture is illustrated in Fig.~\ref{fig:architecture}. 

To efficiently compute the reward posterior, we build on an insight used already by \citet{chan2021}, \citet{garg2021}, and \citet{bajgar2024}: going from Q-values to the reward is computationally much simpler than going from rewards to the optimal Q-function. Our model therefore centres on modelling a posterior distribution over optimal Q-values. This Q-value posterior can be used to evaluate the likelihood (Eq.~\ref{eq:boltzmann-likelihood}), and the Bellman equation allows us to deduce the corresponding posterior over rewards in one step (while the opposite would require pushing the reward posterior through a costly Q-learning algorithm). We approximate the posterior over optimal Q-values by a variational distribution optimized using and objective similar to the evidence lower bound (ELBO).

We now describe each component in turn. We present the method in a form that applies to both discrete and continuous state spaces by expressing the dependence on the environment dynamics as an expectation over successor states. This expectation can be calculated exactly in the tabular case with known dynamics. In continuous environments, we approximate this by samples coming from an expert trajectory or an auxiliary trajectory collected from an online apprentice policy. Further details and other options are provided in Appendix~\ref{app:method-details}.

\subsection{Variational Posterior over Optimal Q-Values}

We model the posterior distribution over optimal Q-values using a variational family $q_{\theta}$, parameterized by a vector of variational parameters $\theta\in\R^{d}$, which we can use to evaluate the joint distribution of optimal Q-values for any given set of state-action pairs. While a range of variational distributions can be used, we use a multi-variate Gaussian (in discrete state spaces) or a Gaussian process (in continuous spaces).\footnote{An extension beyond a strictly Gaussian case can be achieved through warping \citep{snelson2003}. For instance, the sinh-arcsinh transformation~\citep{jones2009} can be useful to model skewness and kurtosis.}

We model the covariance between the Q-values explicitly. Modelling optimal Q-value covariances is crucial as demonstration likelihoods are invariant to constant shifts in Q-values, and Bellman dependencies couple Q-values along trajectories so the posterior uncertainty over optimal Q-values is generally correlated. This is a fundamental property of MDPs, so a good method for modelling uncertainty over optimal Q-values should be able to model such correlation structure.

The variational posterior architecture that we will use for our experiments, and which is outlined in Figure~\ref{fig:architecture}, consists of a neural-network encoder $\varepsilon_\theta$ with parameters $\theta$, which, for any given state-action pair $(s,a)$, produces a mean $\mu_{Q^*}(s,a;\theta)$, a log standard deviation $\log \sigma_{Q^*}(s,a;\theta)$, as well as a latent-space embedding $e(s,a;\theta)$. The embeddings can be used as an input to a Gaussian process kernel $k$ to calculate the correlation matrix of any set of state-action pairs 
(we will be using a radial-basis function (RBF) kernel). For any collection $(s_1, a_1), \dots, (s_n, a_n)$ of state-action pairs, the variational posterior will yield a joint multivariate Gaussian distribution with 
mean and covariance 
\begin{equation}
\label{eq:q-mean-cov}
	\begin{split}
       	\boldsymbol{\mu}_{Q^*} & =\big[\mu_{Q^*}(s_i,a_i;\theta)\big]_{i=1}^n \\ 
        \Sigma_{Q^*} & = \big[ \sigma_{Q^*} (s_i, a_i;\theta) \sigma_{Q^*}(s_j, a_j;\theta) k\big(e(s_i, a_i; {\theta}), e(s_j, a_j; {\theta})\big) \big]_{i,j=1}^n.
	\end{split}
\end{equation}

\subsection{From Optimal Q-Values to Rewards} \label{ssec:implicit-reward}

In a discounted MDP, there is a bijection between the expected reward function
and the optimal Q function. 
Thus, the variational posterior that we have just introduced implicitly fully defines a posterior over rewards. In particular, according to the inverse optimal Bellman equation, 
\begin{equation}
\label{eq:bellman-reward}
	R(s,a) = Q^*(s,a) - \gamma \,\E_{s'\sim p(\cdot|s,a)}\!\left[\max_{a'} Q^*(s',a')\right].
\end{equation}
Unfortunately, the final term -- the optimal state value $V^*(s')=\max_{a'}Q^*(s',a')$, a maximum of jointly Gaussian random variables -- cannot be evaluated analytically to yield a closed-form distribution for the reward. One can obtain an empirical reward posterior sample via Monte Carlo sampling. This is feasible especially at inference time. During training, one can use the reparameterization trick to propagate gradient through those samples; however, this can be computationally costly or substantially increase gradient variance. 

We propose to address these issues by applying Clark's (\citeyear{clark1961}) method. In contrast to the general case of $n$ Gaussians, the first two moments of the maximum of a \emph{pair} of jointly Gaussian variables are available in closed form, and the maximum over $n$ variables (actions) can be built up by taking such pairwise maxima one at a time, approximating each intermediate maximum by a Gaussian, to yield a final approximate distribution $V^*\sim \mathcal{N}(\mu_V,\sigma^2_V)$. The full equations for this standard approximation are provided in Appendix~\ref{app:clark}. 
Though approximate, the Clark approximation gives a Gaussian reward posterior in closed form, which can be used to calculate the KL divergence to the prior significantly more efficiently than by random sampling with the reparameterization trick.

As a simpler alternative, we also tested a \emph{max-mean} approximation, approximating the distribution of the maximum by that of the highest-mean argument and found it to also work well in most cases. A comparison is provided in the appendix, but the results in the main text are using the Clark approximation. 

Once we apply either approximation, the reward in Eq.~\ref{eq:bellman-reward} becomes a linear function of jointly Gaussian random variables and is thus normally distributed with mean 
$\mu_R(s,a;\theta)=\mu_Q(s,a;\theta)-\gamma\E_{s'}\mu_V(s';\theta)$
and variance
\begin{equation}
\label{eq:reward-std}
    \sigma^2_R(s,a;\theta) = \sigma^2_Q(s,a;\theta)
    - 2\gamma\,\E_{s'}\!\big[\mathrm{Cov}\big(Q^*(s,a), V^*(s')\big)\big]
    + \gamma^2\,\E_{s',s''}\!\big[\mathrm{Cov}\big(V^*(s'), V^*(s'')\big)\big],
\end{equation}
where $s'$ and $s''$ are independent draws from $p(\cdot|s,a)$, and the covariance of the approximate value $V^*$ with the current optimal Q-value and across successor states follows from the joint Gaussian of Eq.~\ref{eq:q-mean-cov} under the Clark approximation (Appendix~\ref{app:clark}). When the expectation is estimated by a single successor sample (our default in continuous environments), the last two terms reduce to $-2\gamma\,\mathrm{Cov}\big(Q^*(s,a),V^*(s')\big) + \gamma^2\sigma^2_V(s';\theta)$.

\subsection{Likelihood}


We model demonstrator actions using a Boltzmann policy induced by the uncertain optimal Q-values. If the posterior is approximated using a variational density $q_\theta$, then the probability of the demonstrations under the posterior can be written as 
$p(\D|\theta)=\prod_{(s,a)\in\D}p(a|s;\theta)$ with 
\begin{equation}
\label{eq:likelihood}
    p(a|s;\theta) = \int_{Q^*(s,\cdot)\in\R^{|\as|}} \frac{e^{\beta Q^*(s,a)}}{\sum_{a'}e^{\beta Q^*(s,a')}} q_\theta(Q^*(s,\cdot)) \; d Q^*(s,\cdot)
\end{equation} 
where by $Q^*(s,\cdot)$ we denote the vector of optimal Q-values for all actions in state $s$.

The above integral is not known to have a closed-form solution; however, following the approximation proposed by \citet{lu2021},\footnote{
An alternative would be to use stochastic variational inference with the reparameterization trick~\citep{kingma2022}, but in preliminary experiments it performed worse not only in training speed (as expected) but also in apprentice performance. However, due to the very slow speed, this has not been tuned extensively.} we can approximate Eq.~\ref{eq:likelihood} with
\begin{equation}
    \label{eq:mean-field-action-prob}
    p(a|s;\theta)\approx \scalebox{0.925}{$\left(\sum_{a'} \exp\left(- \frac{\beta\left(\mu_{Q^*}(s,a) - \mu_{Q^*}(s,a')\right)}{\sqrt{1+ 3\pi^{-2} \beta^2 (\sigma_{Q^*}(s,a)^2+\sigma_{Q^*}(s,a')^2-2\Sigma_{aa'}) }}\right) \right)^{-1}$}
\end{equation}
where $\Sigma_{aa'}$ is the covariance between the optimal Q-values of $a$ and $a'$ as calculated via the embeddings and the kernel as $\Sigma_{aa'}:=\sigma_{Q^*}(s,a)\sigma_{Q^*}(s,a')k(e(s,a),e(s,a'))$. 

\subsection{Training Objective}

We will optimize the parameters $\theta$ of our encoder network $\varepsilon_\theta$ using stochastic gradient descent on the loss
\begin{equation}
\label{eq:elbo}
    \scalebox{0.92}{$\mathcal{L}(\theta) = - \sum_{s,a\in \D}[\log p(a|s; \theta)] + \text{KL}(q_R(R;\theta) || p(R)),$}
\end{equation} 
where the first term is the log-likelihood (Eq.~\ref{eq:mean-field-action-prob}), $q_R(R;\theta)$ is the implicit variational posterior over reward (Section~\ref{ssec:implicit-reward}), and $p(R)$ is the prior distribution over reward. In a tabular setting with a multivariate Gaussian prior, the KL can be calculated analytically. In continuous settings, we evaluate the KL between multivariate Gaussians on both the expert transitions within each mini-batch, and, if available, also on a set of auxiliary states, which can come, for instance, from rollouts of the current apprentice policy or from an offline set of non-expert trajectories. Full details are provided in Appendix~\ref{app:kl}. Note that the posterior is trained only where data are provided. While we do provide also some offline results with only expert data (mostly for fair comparison to prior work), uncertainty quantification is useful especially in parts of the state-space uncovered by the expert, so training with some form of auxiliary data is the recommended setup.

\section{Experiments} 
\label{sec:experiements}

We evaluate QVIRL along three dimensions corresponding to the main claims of the paper. First, in randomized gridworlds where high-quality MCMC samples from the reward posterior are available, we test whether QVIRL provides an accurate and calibrated approximation to the Bayesian posterior. Second, we evaluate whether this posterior is useful for apprenticeship learning, both when extracting policies from the posterior mean and when using posterior uncertainty to derive risk-sensitive policies. Third, we test whether QVIRL can serve as a scalable Bayesian backbone for active inverse reinforcement learning, where acquisition functions require uncertainty estimates over Q-values or policies.

We conduct our experiments in three types of environments. Randomized gridworlds provide a controlled tabular setting with known transition dynamics, allowing us to compare variational posterior approximations to ValueWalk~\citep{bajgar2024}, a MCMC-based method able to sample from the true posterior, which we use as a reference oracle. Lunar Lander and Highway Env provide larger continuous-state benchmarks for evaluating apprenticeship learning performance, stability, and data efficiency from compact vector-based observations. Finally, we run apprenticeship learning on ATARI games with raw-pixel observations to demonstrate the scaling potential of our method. We provide further details on the environments, expert policies, and demonstration collection procedure in Appendix~\ref{app:exp-details}.

We compare QVIRL against three baselines. AVRIL~\citep{chan2021} is the closest prior scalable Bayesian IRL method based on variational inference. IQ-Learn~\citep{garg2021} is a strong imitation-learning baseline that learns a point estimate of the soft Q-function; we also include an ensemble version to test whether ensembling alone provides comparable uncertainty benefits. Behaviour cloning~\citep{pomerleau1991} serves as a simple supervised imitation-learning baseline.

\subsection{Evaluation of Posterior Approximation Quality} \label{ssec: post-quality}

To evaluate the quality of the posterior approximation, we run QVIRL and AVRIL on $100$ random 8$\times$8 gridworlds with Gaussian per-state rewards and randomly distributed terminal states. For each gridworld, we generate $5$ expert demonstrations of length $5$ with a rationality coefficient of $\beta=2$. We then use ValueWalk to get $5000$ samples from the true reward posterior and evaluate how closely the two variational methods, AVRIL and our QVIRL, approximate this true posterior.


Table~\ref{tab:gridworld-results} reports the posterior quality metrics. According to each of them, QVIRL produces a  higher-fidelity posterior than AVRIL -- the posterior log density and CI calibration even matches that of ValueWalk  (where the $\log p$ is estimated by kernel density estimation (KDE)). On the other hand, AVRIL's posterior is much further from ground truth and its $90\%$ credible intervals show overconfidence. We confirm this by visualising marginal posteriors over rewards for a few example states (Figure~\ref{fig:exp-posteriors}) where we see that AVRIL's posterior is overconfident, while the mean is biased towards the prior distribution. It is also highly sensitive to the value of its $\lambda$ parameter that controls the strength of the soft binding between the rewards and Q values. If this parameter is too low, the posterior reverts to the prior; if too high, the posterior collapses toward a point estimate. The values in the table are for $\lambda$ that minimized $W_1$ to the oracle on a set of 10 held-out seeds (an extra advantage given to AVRIL).

\begin{figure}[tpb]
    \centering
    \includegraphics[width=0.98\linewidth]{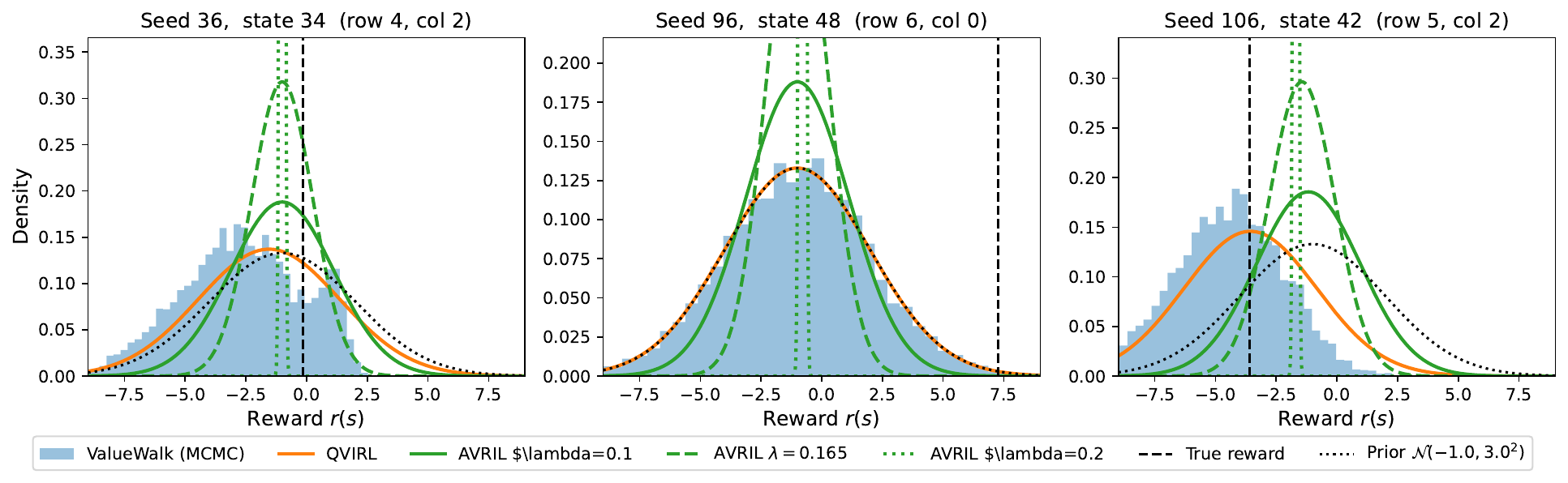}
    \caption{Illustration of posteriors recovered by ValueWalk (MCMC samples used as ground truth), AVRIL (for three values of constraint weight $\lambda)$, and QVIRL in $3$ randomly selected states from our randomized $8\times8$ gridworlds.}
    \label{fig:exp-posteriors}
\end{figure}

\begin{table}[t]
\centering
\small
\caption{Posterior quality on 100 random 8$\times$8 gridworlds (mean $\pm$ stderr across $100$ seeds).
\begin{enumerate*}[label=(\roman*)]
\item log-density of the true reward under the posterior;
\item fraction of states with $r(s)$ within the posterior $90\%$ credible interval;
\item fraction of expert action probabilities $\pi (s | a)$ within the posterior predictive $90\%$ credible interval; and 
\item total variation distance between each method's posterior-predictive policy and that of the ValueWalk oracle.
  \end{enumerate*}
  $\uparrow$/$\downarrow$ indicates higher/lower is better.
  }
\label{tab:gridworld-results}
\resizebox{0.99\textwidth}{!}{
\begin{tabular}{lccccc}
\toprule
Method & (1) $W_1$ (VW) $\downarrow$ &  (2) $\log p(r_\mathrm{true}|\D)$ $\uparrow$  & (3) $r_\mathrm{true}\in 90\%$ CI & (4) $\pi_\mathrm{exp} \in 90\%$ CI & (5) TV (VW) $\downarrow$ \\
\midrule
ValueWalk & $0.000$            & $-2.432 \pm 0.013$ & $0.888 \pm 0.004$ & $0.910 \pm 0.003$ &  $0.000$            \\
QVIRL     & $0.486 \pm 0.011$ & $-2.437 \pm 0.009$ & $0.901 \pm 0.004$ & $0.913 \pm 0.003$ &  $0.207 \pm 0.003$ \\
AVRIL     & $1.704 \pm 0.007$ & $-7.568 \pm 0.284$ & $0.521 \pm 0.006$ & $0.727 \pm 0.005$ &  $0.250 \pm 0.004$ \\
\bottomrule
\end{tabular}
}
\end{table}

\subsection{Apprenticeship Learning}
\label{ssec:app-rew}

In apprenticeship learning settings, we aim to recover a policy given expert demonstrations. We evaluate apprenticeship learning in gridworlds, Lunar Lander, and Highway environments.

Table~\ref{tab:apprenticeship-results} reports policy performance on the \textbf{synthetic gridworlds} from the preceding subsection. Returns are normalised so that 0 corresponds to a random policy and 100 to the optimal policy under the true reward within each random gridworld. The \emph{mean} variant of each method derives a policy by optimising the mean posterior return; the \emph{CVaR} variant optimises the 0.1-CVaR of the return distribution, yielding a risk-averse policy.

\begin{table}[t]
\centering
\caption{Apprenticeship learning on 100 random 8$\times$8 gridworlds (mean $\pm$ stderr). Returns are normalised: 0 = random-policy return, 100 = optimal-policy return under the true reward. $J_{r_\mathrm{true}}(\pi)$: true-reward return of the apprentice policy. $\mathbb{E}_{r|\D}[J_r(\pi)]$: expected return under the ValueWalk oracle posterior. $\mathrm{CVaR}^{0.1}_{r|\D}[J_r(\pi)]$: conditional value-at-risk (10\% level) of the return under the posterior, i.e.\ the expected return in the worst 10\% cases under the ValueWalk reward posterior; the CVaR policy directly optimizes this objective under each method's own posterior.}
\label{tab:apprenticeship-results}
\begin{tabular}{lccc}
\toprule
Policy $\pi$ & $J_{r_\mathrm{true}}(\pi)$ & $\mathbb{E}_{r|\D}[J_r(\pi)]$ & $\mathrm{CVaR}^{0.1}_{r|\D}[J_r(\pi)]$ \\
\midrule
  ValueWalk (mean) & $\phantom{-}75.4 \pm 1.8$ & $112.1 \pm 2.8$ & $\phantom{-}59.9 \pm 2.8$ \\
  ValueWalk (CVaR) & $\phantom{-}75.8 \pm 1.3$ & $107.7 \pm 3.0$ & $\phantom{-}62.1 \pm 2.3$ \\
\midrule
QVIRL (mean) & $\phantom{-}70.8 \pm 2.1$ & $106.4 \pm 3.2$ & $\phantom{-}49.0 \pm 3.9$ \\
QVIRL (CVaR) & $\phantom{-}66.9 \pm 1.9$ & $\phantom{0}89.6 \pm 3.3$ & $\phantom{-}49.7 \pm 2.8$ \\
AVRIL (mean) & $-22.2 \pm 4.8$ & $\phantom{00}5.4 \pm 2.7$ & $-86.3 \pm 3.4$ \\
AVRIL (CVaR) & $\phantom{-}36.7 \pm 1.1$ & $\phantom{0}27.1 \pm 1.4$ & $\phantom{-}14.0 \pm 1.3$ \\
\midrule
Optimal      & $100.0 \pm 0.0$ & $\phantom{0}64.0 \pm 3.4$ & $\phantom{0}-4.1 \pm 4.5$ \\
Random       & $\phantom{-0}0.0 \pm 0.0$ & $\phantom{00}5.4 \pm 1.4$ & $-17.1 \pm 1.3$ \\
\bottomrule
\end{tabular}
\end{table}

QVIRL's policies perform only slightly worse than the ValueWalk oracle under both the true reward and the reward posterior. Optimising CVaR improves worst-case posterior performance at the cost of lower mean return for all methods, although AVRIL remains weak across all settings. QVIRL-CVaR's  CVaR return also lags behind ValueWalk-CVaR's, suggesting lower posterior tail fidelity compared to the posterior mean. Interestingly, the performance of the true-reward-optimal policy evaluated under the posterior is poor, especially in case of CVaR. Since the expert data cover at most 25 states, there are many states with substantial uncertainty where the posterior assigns probability to negative rewards. The optimal policy does not try to avoid such high-uncertainty states.

In the \textbf{continuous-state Lunar Lander and Highway environments}, QVIRL reaches expert-like performance with fewer demonstrations and lower run-to-run variability than the baselines (Figure~\ref{fig:app-return}). All methods eventually achieve expert performance with $15$ or $3$ trajectories respectively, albeit with larger variance than QVIRL. Furthermore, QVIRL's held-out expert-action log-likelihood is also high from a single demonstration, suggesting a better posterior quality, while other methods initially overfit before becoming competitive with more data (Figure~\ref{fig:app-likelihood}). However, all methods start to perform competitively as they see more data.

\begin{figure}[t]
    \centering 
\includegraphics[width=0.48\textwidth]{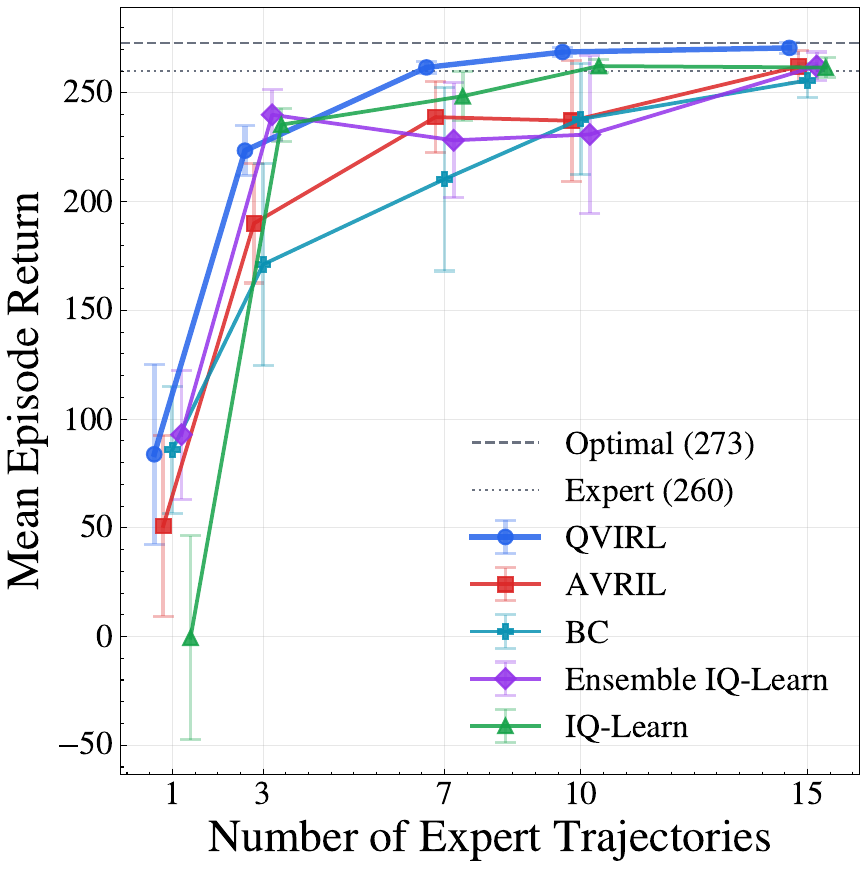}     \includegraphics[width=0.48\textwidth]{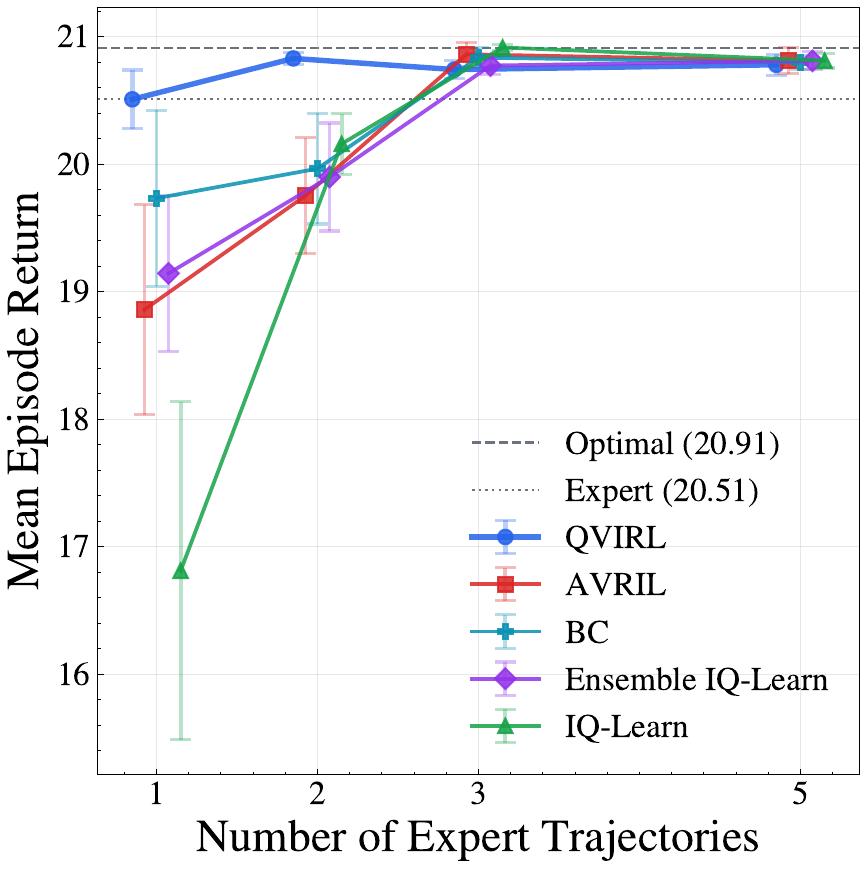} 
    \caption{Mean episode return in apprenticeship learning with different numbers of expert demonstration trajectories in the Lunar Lander (left) and Highway (right) environments. Averaged over $10$ runs, with standard errors of the mean. All methods are evaluated on discrete number of expert trajectories, but we space the points horizontally to improve plot readability.} 
    \label{fig:app-return}
\end{figure}

\begin{figure}[t]
    \centering 
    \includegraphics[width=0.48\textwidth]{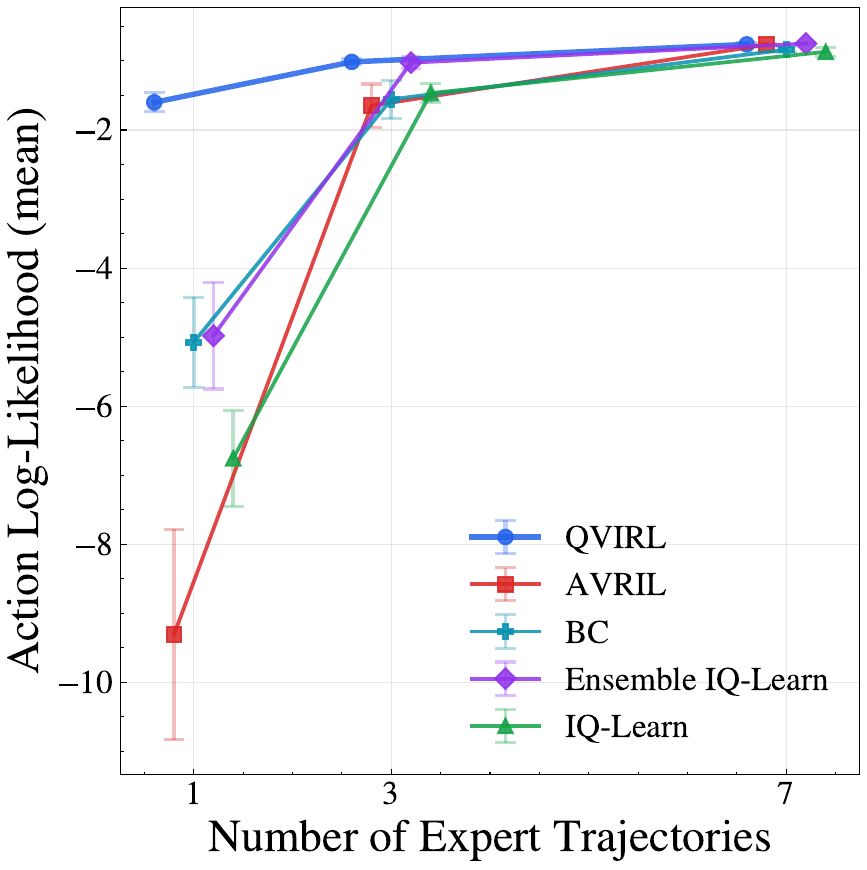}     \includegraphics[width=0.48\textwidth]{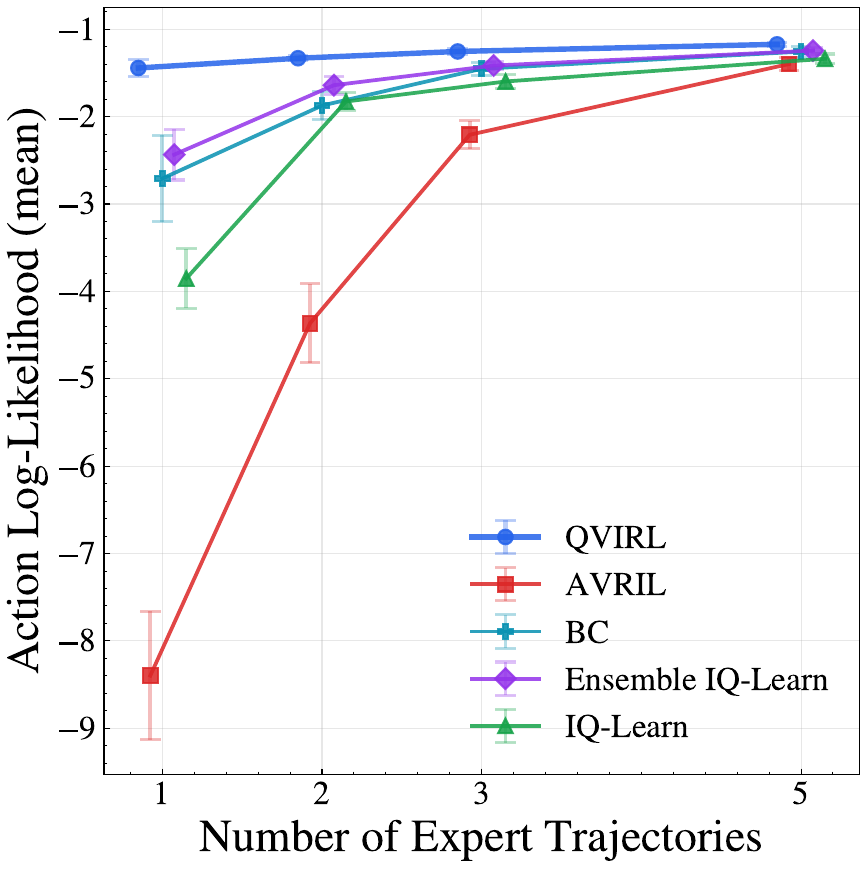} 
    \caption{Mean log-likelihood of held-out expert actions with different numbers of expert trajectories in the Lunar Lander (left) and Highway (right) environments. Averaged over $10$ runs, with standard errors of the mean. All methods are evaluated on discrete numbers of expert trajectories, but we space the points horizontally to improve plot readability.} 
    \label{fig:app-likelihood}
\end{figure}

Finally, we evaluate apprenticeship learning on \textbf{Atari} games from raw-pixel observations, following the evaluation protocol of IQ-Learn~\citep{garg2021} on two of their benchmark games, Pong and Space Invaders. Unlike the vector-observation environments above, QVIRL here uses a convolutional (NatureCNN) encoder trained end-to-end together with the variational Q-value posterior. We train each method on 20 expert demonstration trajectories (matching IQ-Learn's protocol) collected from a Boltzmann-rational expert previously trained using RL, and evaluate the resulting apprentice policy by expected raw game return. Table~\ref{tab:atari-results} shows that QVIRL is competitive with IQ-Learn, the strongest point-estimate imitation-learning baseline, while -- unlike IQ-Learn -- also providing a posterior over rewards. We also evaluate the log-likelihoods on a held-out set of expert demonstrations. While QVIRL is the only method that does proper uncertainty quantification over Q-values, we also provide the corresponding values based on the classification logits from BC and the Boltzmann-rational likelihood using the Q-values learnt by IQ-Learn and AVRIL. These experiments were mainly meant to show that QVIRL's scalability extends to high-dimensional, raw-pixel observation spaces without sacrificing apprenticeship-learning performance relative to a state-of-the-art non-Bayesian method. 

\begin{table}[t]
\centering
\caption{Apprenticeship learning on Atari: mean episode return and log likelihood (LL) of expert actions on held-out demonstrations (mean $\pm$ standard error across seeds). QVIRL LL is computed under the probit likelihood; BC, IQ-Learn, and AVRIL LL are computed under the point likelihood.}
\label{tab:atari-results}
\begin{tabular}{lcccc}
\toprule
 & \multicolumn{2}{c}{Pong} & \multicolumn{2}{c}{Space Inv.} \\
\cmidrule(lr){2-3} \cmidrule(lr){4-5}
Method & Return & LL & Return & LL \\
\midrule
BC       & 19.1 $\pm$ 0.8 & -14.81 $\pm$ 0.49 & 597 $\pm$ 48 & -18.67 $\pm$ 0.05 \\
IQ-Learn & 20.0 $\pm$ 0.2 & -12.43 $\pm$ 2.27  & 740 $\pm$ 31 & -8.99  $\pm$ 1.79 \\
AVRIL    & -5.2 $\pm$ 9.7 & -8.79  $\pm$ 3.80  & 658 $\pm$ 34 & -18.62 $\pm$ 0.07 \\
QVIRL    & 19.7 $\pm$ 0.6 & -2.09  $\pm$ 0.12  & 701 $\pm$ 42 & -5.19  $\pm$ 1.04 \\
\bottomrule
\end{tabular}
\end{table}

\subsection{Active Learning}

Finally, we evaluate whether QVIRL's posterior uncertainty can support active IRL. Most acquisition functions for active IRL require uncertainty estimates over Q-values or (almost equivalently) the unknown expert or optimal policies. MCMC-based Bayesian IRL methods can provide such estimates in small tabular domains, but do not scale to larger continuous-state environments. Conversely, AVRIL provides a reward posterior, but not the Q-value posterior required by existing acquisition functions.\footnote{In our gridworld experiments, we circumvent this issue by sampling rewards from the reward posterior and converting each to the corresponding Q-values via value iteration, but this is very costly and does not translate to environments that don't have known dynamics or are larger.} QVIRL is thus a natural candidate for scaling active IRL beyond tabular settings.

We combine QVIRL with two acquisition functions: (1) Reward EIG~\citep{bajgar2025}, which targets information gain about the reward, and (2) ActiveVaR~\citep{brown2018}, which targets regret-risk reduction. In randomized gridworlds, we compare QVIRL, AVRIL, and ValueWalk using Reward EIG, querying the expert for up to $30$ initial states from which they provide trajectories of length $5$. In Lunar Lander, where no scalable Bayesian oracle is available, we compare only against random sampling. We initialise QVIRL with a single expert trajectory and allow the acquisition function to select initial states from $100$ candidate states sampled from expert data. A trajectory segment of length 10 is provided based on each query.

Figure~\ref{fig:active_results} shows that QVIRL-based acquisition nearly matches ValueWalk in gridworlds, while AVRIL performs significantly worse and stalls after $10$ queries. This suggests that AVRIL's uncertainty estimates do not update sufficiently to support informative querying and thus obtaining new informative data. In Lunar Lander, QVIRL combined with either Reward EIG or ActiveVaR substantially outperforms random querying, demonstrating that QVIRL can act as a scalable Bayesian backbone for active IRL in settings beyond small tabular domains.

\begin{figure}[t]
    \centering
    \begin{subfigure}{0.45\textwidth}
        \centering
        \includegraphics[width=\linewidth]{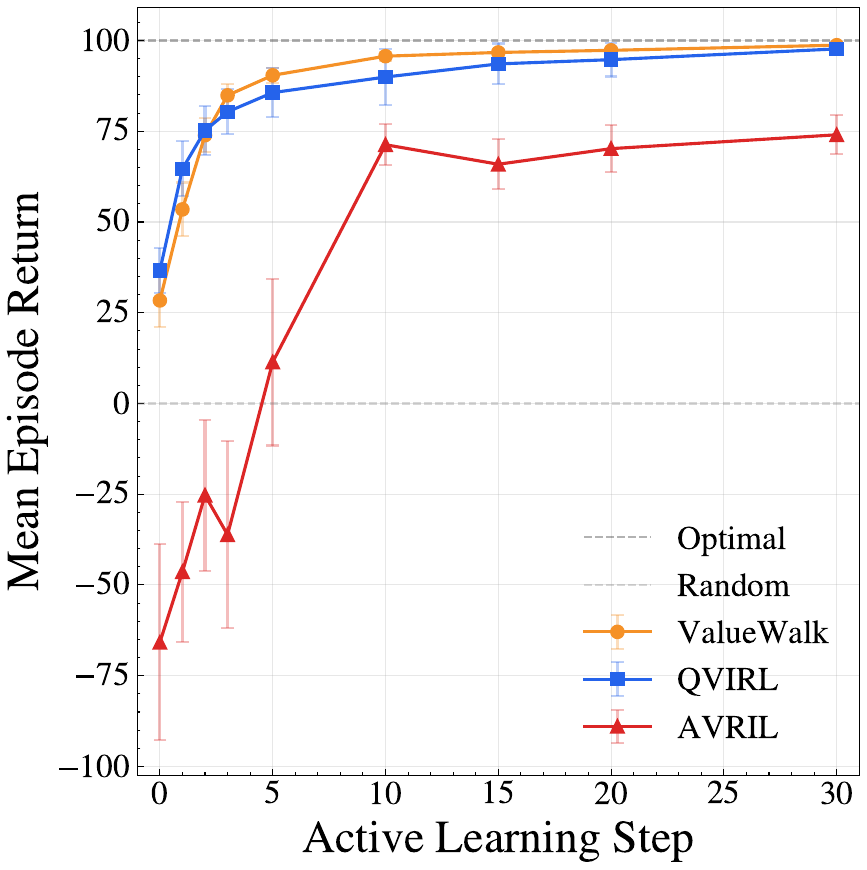}
        \caption{Active learning with QVIRL, ValueWalk, and AVRIL using the Reward EIG acquisition function in the 8x8 randomised gridworlds with demonstration trajectories of length 5. Averaged over $16$ runs, with standard errors of the mean.}
        \label{fig:learning_curve}
    \end{subfigure}
    \hfill
    \begin{subfigure}{0.45\textwidth}
        \centering
        \includegraphics[width=\linewidth]{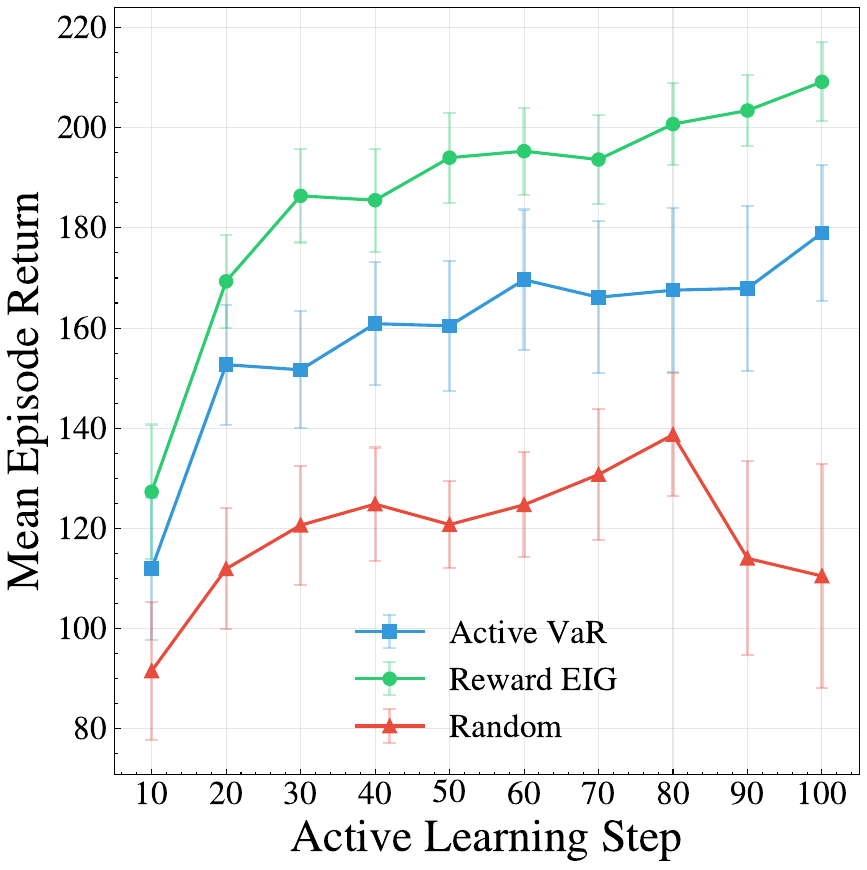}
        \caption{Active learning with QVIRL in Lunar Lander with different acquisition functions, and demonstration trajectories of length $10$. Averaged over $10$ runs, with standard errors of the mean.}
        \label{fig:var_comparison}
    \end{subfigure} 
    \caption{Active learning results on randomized gridworlds (left) and Lunar Lander (right).}
    \label{fig:active_results}
\end{figure}

\section{Discussion} \label{sec: discussion}

In the previous section, we evaluated QVIRL against the desiderata for a Bayesian IRL method.

Firstly, QVIRL aims to approximate the true Bayesian posterior over optimal Q-values and rewards using variational inference. Despite relying on tractable approximations, our empirical results in Section~\ref{sec:experiements} indicate that QVIRL recovers well-calibrated posteriors in tabular environments, unlike AVRIL, which fails to approximate the true reward posterior and only provides a point estimate of the optimal Q-values. MCMC-based Bayesian IRL methods, such as ValueWalk and PolicyWalk, are also capable of estimating the true posterior over the Q-values and rewards but struggle to scale beyond small environments. In contrast, QVIRL leverages variational inference to scale the principled posterior estimation to more complex environments.

Secondly, QVIRL achieves competitive performance in apprenticeship learning from expert demonstrations. It is an efficient, gradient-based algorithm that, unlike most methods for Bayesian IRL, supports training even from raw pixel observations.

Finally, QVIRL naturally enables active learning. By explicitly modelling the optimal Q-value posterior, QVIRL can serve as a scalable Bayesian IRL backbone for acquisition functions that track uncertainty in Q-value space. This removes a key scalability bottleneck of previous MCMC-based active IRL methods and opens the possibility of active learning in much larger environments.

\section{Conclusion}

We have introduced a scalable method for Bayesian inverse reinforcement learning that preserves posterior uncertainty quantification over both optimal Q-values and rewards. To our knowledge, QVIRL is the first variational approach to Bayesian IRL that explicitly models a joint posterior over optimal Q-values while scaling beyond small tabular environments. The algorithm performs competitively not only against Bayesian IRL baselines, but also against state-of-the-art imitation learning methods. Due to its principled uncertainty quantification of optimal Q-values, QVIRL enables scalable active learning by design. These contributions are a sizable step toward bringing Bayesian IRL methods with calibrated uncertainty into higher-dimensional settings.

\bibliography{zotero_bibtex}
\bibliographystyle{rlj}

\beginSupplementaryMaterials

\section{Method details}
\label{app:method-details}

\subsection{Evaluating Expectations over Next States}
\label{ssec:expectation}
Let us first discuss how to evaluate the expectation over next states in the inverse Bellman equation (Eq.~\ref{eq:bellman-reward} in the main paper) used to derive the reward distribution from the Q posterior:
\begin{equation}
	R(s,a) = Q^*(s,a) - \gamma \,\E_{s'\sim p(\cdot|s,a)}\hat{V}^*(s').
\end{equation}
$\hat{V}^*(s') \approx \max_{a'} Q^*(s',a')$ is the random variable approximating the state-value; its construction is the subject of Section~\ref{app:clark}. For terminal transitions, we set $\gamma=0$, so that $R(s,a)=Q^*(s,a)$ exactly.

\subsubsection{Full Probabilistic Dynamics Model} \label{ssec:prob-qvirl}

In discrete environments, if a probabilistic transition model is available (i.e. we either know the environment dynamics or have some estimated model), we can compute the expectation exactly, by writing the expectation over successor states as a sum weighted by the transition probabilities:
\begin{equation}
    R(s,a) = Q^*(s,a) - \gamma \sum_{s'} p(s'|s,a) \hat{V}^*(s'). 
\end{equation}
This is the version that we use in gridworlds, since we assume their dynamics to be fully known.

In continuous spaces with a transition model, the sum becomes an integral
\begin{equation}
    R(s,a) = Q^*(s,a) - \gamma \int_{s' \in \mathcal{S}} p(s'|s,a) \hat{V}^*(s')\,ds' ,
\end{equation}
which, in general, needs to be approximated numerically. One option is standard Monte Carlo approximation, where we sample a set of possible successor states $s'\sim p(s'|s,a)$ and approximate the integral by their empirical mean. The advantage of this option is that we do not need the full probabilistic model of the environment, but in fact need only access to a simulator that can provide such next-state samples given a state-action pair.

\subsubsection{Empirical Transitions}
In case we cannot sample from a transition model for an arbitrary state-action pair, we need to rely on empirical transitions, i.e. we assume we have access to a set of empirical samples $(s,a,s')$. Such transitions are part of the expert demonstration dataset, but note that the inverse Bellman equation holds for \emph{any} action, not just the expert ones, so such transitions can come from other sources including rollouts of the apprentice policy, a random policy, or any other policy. Such rollouts can start according to the initial state distribution or any other distribution. Such data can either be generated online, during training, or come from a static dataset. The key property we are building on is just the assumption that the data come from the environment of interest, i.e. $s'\sim p(s'|s,a)$.

Whatever the provenance of the data, the expectation then gets crudely approximated as
\begin{equation}
    \E_{s''\sim p(\cdot|s,a)}\hat{V}^*(s'') \approx \hat{V}^*(s')\;,
\end{equation}
which can be interpreted as a single-sample version of the Monte-Carlo approximation from the previous section. Note that this approximation becomes exact in deterministic environments. Moreover, such single-sample estimates are not used in isolation: with enough data, the dataset contains many transitions from proximate states -- or even the same state -- which improves the quality of the approximation for the purposes of the training signal.

\subsection{Clark Approximation to the State-Value Distribution}
\label{app:clark}

This subsection details how we approximate the distribution of $V^*(s)=\max_a Q^*(s,a)$ under the modelling assumption of our method that the optimal Q-values are distributed as a multivariate Gaussian. At its core, this is a standard application of Clark's (\citeyear{clark1961}) moment-matching recursion (see \citealt{hennig2009} for a related moment-matching treatment of correlated-Gaussian maxima in machine learning); in addition, we describe how the covariances of the resulting approximate state-value $\hat V^*(s)$ with the other Gaussian quantities of the model -- the optimal Q-values and the approximate state-values of other states -- follow from the same recursion in closed form. Note that the max-mean approximation $V^*(s)\approx Q^*(s,\argmax_a \mu_{Q^*}(s,a))$ provides a much simpler and easier-to-implement alternative, which still performs fairly well (Appendix~\ref{app:approximations}). When re-implementing this algorithm, we recommend max-mean as a starting point, extending to Clark for slightly higher fidelity once the max-mean version is working.

\paragraph{Moment recursion.}
Fix a state $s$ and let us locally shorten the notation to $\mu_{Q,i}:=\mu_{Q^*}(s,a_i)$ and $\sigma_{Q,i}:=\sigma_{Q^*}(s,a_i)$ for the marginal mean and standard deviation of $Q^*(s,a_i)$; the covariances between the Q-values are those implied by Equation~\ref{eq:q-mean-cov}.

Setting $M_1:=Q^*(s, a_1)$ and $M_i:=\max\bigl(M_{i-1}, Q^*(s,a_i)\bigr)$ for $i=2,\dots,|\as|$, and assuming that $M_{i-1}\approx\mathcal{N}(\mu_{M,i-1},\sigma_{M,i-1}^2)$, Clark approximates the distribution of $M_i$ by $\mathcal{N}(\mu_{M,i},\sigma_{M,i}^2)$ where
\begin{align*}
    \mu_{M,i} &= \mu_{M,i-1}\,\Phi(\nu_i)+\mu_{Q,i}\, \Phi(-\nu_i)+\omega_i \,\phi(\nu_i) \\
    \sigma_{M,i}^2 &= (\mu_{M,i-1}^2 + \sigma_{M,i-1}^2)\, \Phi(\nu_i) + (\mu_{Q,i}^2 + \sigma_{Q,i}^2)\,\Phi(-\nu_i) + (\mu_{M,i-1}+\mu_{Q,i})\, \omega_i\, \phi(\nu_i) - \mu_{M,i}^2
\end{align*}
where
\begin{equation*}
    \omega^2_i = \sigma^2_{M,i-1} + \sigma^2_{Q,i} - 2\, c_i \;, \;\;\;
    \nu_i = (\mu_{M,i-1} - \mu_{Q,i}) / \omega_i ,
\end{equation*}
$\Phi$ and $\phi$ are respectively the CDF and PDF of the standard normal distribution, and $c_i:=\mathrm{Cov}\bigl(M_{i-1},Q^*(s,a_i)\bigr)$ is the covariance between the running maximum and the next Q-value, whose computation we address next. We then approximate the distribution of the final maximum, i.e. the state value $V^*$, by the Gaussian obtained at the last iteration, $\hat V^*(s)\sim \mathcal{N}\bigl(\mu_{V^*}(s),\sigma^2_{V^*}(s)\bigr)$ with $\mu_{V^*}(s):=\mu_{M,|\as|}$ and $\sigma^2_{V^*}(s):=\sigma^2_{M,|\as|}$.

\paragraph{Covariance propagation.}
Because each running maximum $M_{i-1}$ is itself correlated with the remaining Q-values, the covariances $c_i$ cannot be read directly off Equation~\ref{eq:q-mean-cov}; they must be propagated alongside the moments. Clark's method supplies the matching update: for \emph{any} variable $Z$ jointly Gaussian with $Q^*(s,\cdot)$,
\begin{equation}
\label{eq:clark-cov}
    \mathrm{Cov}(M_i, Z) = \Phi(\nu_i)\,\mathrm{Cov}(M_{i-1}, Z) + \Phi(-\nu_i)\,\mathrm{Cov}\bigl(Q^*(s,a_i), Z\bigr),
\end{equation}
initialised by $\mathrm{Cov}(M_1,Z)=\mathrm{Cov}\bigl(Q^*(s,a_1),Z\bigr)$. Taking $Z=Q^*(s,a_i)$ and carrying the recursion to step $i-1$ yields the $c_i$ required by the moment recursion; other choices of $Z$ will yield all the cross-covariances required by our method.

\paragraph{Closed form of the propagated covariances.}
The update in Equation~\ref{eq:clark-cov} is linear in the covariances, so it can be unrolled into a closed form. Collecting the factors $\Phi(\nu_i)$ accumulated across the sweep into the weights
\begin{equation}
\label{eq:clark-weights}
    w_1(s) := \prod_{i=2}^{|\as|}\Phi(\nu_i), \qquad
    w_j(s) := \bigl(1-\Phi(\nu_j)\bigr)\prod_{i=j+1}^{|\as|}\Phi(\nu_i) \;\;\text{ for } j\geq 2,
\end{equation}
which satisfy $w_j(s)\geq 0$ and $\sum_j w_j(s)=1$, the covariance at the final iteration becomes
\begin{equation}
\label{eq:clark-linear}
    \mathrm{Cov}\bigl(\hat V^*(s), Z\bigr) = \sum_{j=1}^{|\as|} w_j(s)\, \mathrm{Cov}\bigl(Q^*(s,a_j), Z\bigr)
\end{equation}
for any $Z$ jointly Gaussian with the Q-values. In other words, for the purpose of computing covariances, $\hat V^*(s)$ behaves like the convex combination $\sum_j w_j(s)\, Q^*(s,a_j)$ of the state's Q-values -- a ``soft argmax'' whose weights reflect how likely each action is to attain the maximum. We stress that this linear surrogate is used only for the covariances: the mean and variance of $\hat V^*(s)$ are the moment-matched values $\mu_{V^*}(s)$ and $\sigma^2_{V^*}(s)$ above, both of which the surrogate would generally underestimate.\footnote{The surrogate mean $\sum_j w_j(s)\,\mu_{Q,j}$ is a convex combination of the action means, so it drops the option-value terms $\omega_i\,\phi(\nu_i)$ and lies below even $\max_j \mu_{Q,j}$; the surrogate variance ignores the variance contributed by the uncertainty about \emph{which} action attains the maximum.}

\paragraph{The induced reward posterior.}
Substituting the Gaussian approximation of the maximum into the inverse Bellman equation (Equation~\ref{eq:bellman-reward}) replaces the intractable nonlinear maximum by the Gaussian $\hat V^*(s')$:
\begin{equation}
\label{eq:bellman-reward-clark}
    R(s,a) = Q^*(s,a) - \gamma \,\E_{s'\sim p(\cdot|s,a)}\!\bigl[\hat V^*(s')\bigr].
\end{equation}
The reward is then an affine functional of jointly Gaussian variables -- the optimal Q-value $Q^*(s,a)$ and the successor values $\hat V^*(s')$ -- so the induced reward posterior is itself Gaussian, with moments given by the following proposition.
\begin{proposition}
\label{prop:reward-gaussian}
Conditional on the Clark approximations $\hat V^*(s')\sim\mathcal N\bigl(\mu_{V^*}(s'),\sigma^2_{V^*}(s')\bigr)$, with covariances given by Equation~\ref{eq:clark-linear}, the reward $R(s,a)$ defined by Equation~\ref{eq:bellman-reward-clark} from the optimal Q-values with mean and covariance from Equation~\ref{eq:q-mean-cov} is normally distributed with mean
\begin{equation}
\label{eq:reward-mean}
    \mu_R(s,a) = \mu_{Q^*}(s,a) - \gamma \,\E_{s'\sim p(\cdot|s,a)}\!\bigl[ \mu_{V^*}(s')\bigr]
\end{equation}
and variance
\begin{equation}
	\label{eq:reward-var-clark}
	\begin{split}
		\scalebox{0.90}{$\sigma^2_R(s,a) = \sigma^2_{Q^*}\bigl(s,a\bigr) - 2 \gamma \, \E_{s'\sim p(\cdot|s,a)}\!\bigl[C_{QV}(s,a;s')\bigr] + \gamma^2 \E_{\substack{s'\sim p(\cdot|s,a)\\ s''\sim p(\cdot|s,a)}}\!\bigl[C_{VV}(s',s'')\bigr]$}
	\end{split}
\end{equation}
where $s'$ and $s''$ are independent draws from $p(\cdot|s,a)$, and the covariances of the successor value with, respectively, the optimal Q-value at $(s,a)$ and the successor value at another sampled state are the bilinear forms
\begin{align}
    C_{QV}(s,a;s') &:= \mathrm{Cov}\bigl(Q^*(s,a),\hat V^*(s')\bigr) = \sum_{j} w_j(s')\;\mathrm{Cov}\bigl(Q^*(s,a),\, Q^*(s',a_j)\bigr), \label{eq:cqv}\\
    C_{VV}(s',s'') &:= \mathrm{Cov}\bigl(\hat V^*(s'),\hat V^*(s'')\bigr) = \sum_{j,l} w_j(s')\, w_l(s'')\;\mathrm{Cov}\bigl(Q^*(s',a_j),\, Q^*(s'',a_l)\bigr) \label{eq:cvv}
\end{align}
for $s'\neq s''$, with the weights of Equation~\ref{eq:clark-weights} and the Q-covariances of Equation~\ref{eq:q-mean-cov}. When $s''=s'$ the two values coincide and $C_{VV}(s',s')=\sigma^2_{V^*}(s')$.
\end{proposition}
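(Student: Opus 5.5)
The plan is to treat the statement as a direct consequence of closure of Gaussian families under affine maps, applied within the modelling assumption the proposition states explicitly. The assumption is that the collection consisting of $Q^*(s,a)$ and the approximate successor values $\hat V^*(s')$, over the successor states in the support of $p(\cdot|s,a)$, is jointly Gaussian. Its means are $\mu_{Q^*}(s,a)$ and $\mu_{V^*}(s')$, its variances are $\sigma^2_{Q^*}(s,a)$ and $\sigma^2_{V^*}(s')$, and its cross-covariances are given by Equation~\ref{eq:clark-linear}. I will first make this joint law explicit, then write $R(s,a)$ as a linear functional of it, and finally read off the two moments.

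First, under Equation~\ref{eq:bellman-reward-clark} the reward $R(s,a)$ is $Q^*(s,a)$ minus $\gamma$ times a (possibly integral) average of the jointly Gaussian family $\{\hat V^*(s')\}$ with deterministic weights $p(s'|s,a)$. In the finite case this is a finite linear combination, so Gaussianity is immediate. In the continuous case I will argue that an average of a Gaussian process with a deterministic measure stays Gaussian, as an $L^2$ limit of finite Riemann/Monte Carlo sums, since $L^2$ limits of Gaussians are Gaussian. I would just cite this standard fact rather than belabour measurability.

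Second, the mean follows from linearity of expectation together with Fubini, exchanging $\E$ over $q_\theta$ with $\E_{s'}$. This gives Equation~\ref{eq:reward-mean}, using that the mean of $\hat V^*(s')$ is the moment-matched $\mu_{V^*}(s')$.

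Third, the variance comes from bilinearity of covariance:
\[
\mathrm{Var}\bigl(Q - \gamma \E_{s'}\hat V(s')\bigr) = \mathrm{Var}(Q) - 2\gamma\,\E_{s'}\mathrm{Cov}\bigl(Q,\hat V(s')\bigr) + \gamma^2\,\E_{s'}\E_{s''}\mathrm{Cov}\bigl(\hat V(s'),\hat V(s'')\bigr),
\]
where the double integral is naturally written with $s',s''$ as independent draws. I will then substitute Equation~\ref{eq:clark-linear}. Taking $Z=Q^*(s,a)$ gives $C_{QV}$. For $C_{VV}$ with $s'\neq s''$, I apply Equation~\ref{eq:clark-linear} for $\hat V^*(s')$ with $Z=\hat V^*(s'')$. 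Then, by symmetry, I apply it again for $\hat V^*(s'')$ with $Z=Q^*(s',a_j)$. This step requires noting that the Clark recursion at $s''$ involves only Q-values at $s''$, so $\hat V^*(s'')$ is ``jointly Gaussian'' with $Q^*(s',\cdot)$ in the sense the recursion uses. The result is the double sum in Equation~\ref{eq:cvv}. On the diagonal $s'=s''$ the linear surrogate must not be used, since it would underestimate the variance, as the paper's footnote notes. There I instead use the moment-matched $\sigma^2_{V^*}(s')$, matching the stated convention.

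The main obstacle is consistency. The surrogate covariances and the moment-matched variances are defined separately, so it is not automatic that they assemble into a valid positive semidefinite joint covariance. The proposition sidesteps this by being stated ``conditional on'' the approximation, and I will handle it the same way, taking joint Gaussianity with these moments as the hypothesis. Optionally, I would remark that positive semidefiniteness holds when $\sigma^2_{V^*}(s')$ is at least the surrogate variance $w^\top\Sigma_{Q^*}(s')w$. In that case the joint covariance equals the surrogate one plus a nonnegative diagonal correction, and the surrogate covariance is PSD because it is a linear image of $\Sigma_{Q^*}$.
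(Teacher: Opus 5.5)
Your proposal is correct and takes essentially the same route as the paper: you stack $Q^*(s,a)$ with the successor values $\hat V^*(s')$ into a jointly Gaussian vector, write $R(s,a)$ as its affine image, and identify the covariance entries via Equation~\ref{eq:clark-linear} (with $Z=Q^*(s,a)$ for $C_{QV}$, and with $Z=\hat V^*(s'')$ followed by $Z=Q^*(s',a_j)$ at $s''$ for $C_{VV}$), using the moment-matched $\sigma^2_{V^*}(s')$ on the diagonal. Your $L^2$-limit remark for continuous dynamics and your positive-semidefiniteness remark go slightly beyond the paper's proof, which treats only finite sums (leaving sampled dynamics to Monte Carlo estimates) and discusses positive semidefiniteness separately after Corollary~\ref{cor:joint-reward}.
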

\begin{proof}
    We use the fact that if a random vector $X$ in $\R^n$ has a multivariate normal distribution with mean $\mu$ and covariance $\Sigma$, and $A\in\R^{m\times n}$ is a matrix, then $Y=AX$ also has a multivariate normal distribution with mean $A\mu$ and covariance $A\Sigma A^\top$.

    Conditional on the Clark approximations, the successor values $\hat V^*(s')\sim\mathcal N\bigl(\mu_{V^*}(s'),\sigma^2_{V^*}(s')\bigr)$ are jointly Gaussian with the optimal Q-value $Q^*(s,a)$. Collecting them into the vector
    \begin{equation*}
        \mathbf{z} = \Bigl(Q^*(s,a);\; \hat V^*(s')\big|_{s'\in \sts}\Bigr)^\top ,
    \end{equation*}
    Equation~\ref{eq:bellman-reward-clark} can be written as the affine map
    \begin{equation*}
        R(s,a) = A \mathbf{z}\ , \qquad
        A = \Bigl(1, \; -\gamma\, p(s_1|s,a),\; \dots,\; -\gamma\, p(s_n|s,a)\Bigr),
    \end{equation*}
    so $R(s,a)$ is normally distributed with mean $A\mu_{\mathbf z}$ and variance $A\Sigma_{\mathbf z}A^\top$, where $\mu_{\mathbf z}$ and $\Sigma_{\mathbf z}$ are the mean and covariance of $\mathbf z$.

    The mean of $\mathbf z$ is $\mu_{\mathbf z}=\bigl(\mu_{Q^*}(s,a);\,\mu_{V^*}(s')\big|_{s'\in\sts}\bigr)^\top$, and $A\mu_{\mathbf z}$ gives Equation~\ref{eq:reward-mean} directly. The covariance $\Sigma_{\mathbf z}$ has the block structure
    \begin{equation*}
        [\Sigma_{\mathbf z}]_{00} = \sigma^2_{Q^*}(s,a), \quad
        [\Sigma_{\mathbf z}]_{0,s'} = C_{QV}(s,a;s'), \quad
        [\Sigma_{\mathbf z}]_{s',s''} = C_{VV}(s',s''),
    \end{equation*}
    with $C_{QV}$ and $C_{VV}$ as defined in the proposition. Expanding $A\Sigma_{\mathbf z}A^\top$ then yields
    \begin{equation*}
        \sigma^2_R(s,a) = \sigma^2_{Q^*}(s,a) - 2\gamma\sum_{s'} p(s'|s,a)\,C_{QV}(s,a;s') + \gamma^2\sum_{s'}\sum_{s''} p(s'|s,a)\,p(s''|s,a)\,C_{VV}(s',s''),
    \end{equation*}
    which is Equation~\ref{eq:reward-var-clark} written with the successor expectations expressed as sums over the (known) dynamics; the sampled-dynamics cases of Section~\ref{ssec:expectation} replace these sums by their Monte-Carlo or single-sample estimates.

    It remains to justify the covariance entries. Applying Equation~\ref{eq:clark-linear} at $s'$ with $Z=Q^*(s,a)$ gives Equation~\ref{eq:cqv}. Applying it at $s'$ with $Z=\hat V^*(s'')$, and then once more at $s''$ with $Z=Q^*(s',a_j)$ for each $j$, gives Equation~\ref{eq:cvv}. (Equivalently, both can be obtained by carrying the second variable as an external term through the covariance recursion of Equation~\ref{eq:clark-cov}; the linearity of the recursion makes the two computations identical.) When $s''=s'$, the exact Clark variance $\sigma^2_{V^*}(s')$ from the moment recursion is used instead of the surrogate value $\sum_{j,l} w_j(s')\,w_l(s')\,\mathrm{Cov}\bigl(Q^*(s',a_j),Q^*(s',a_l)\bigr)$, which would underestimate it.
\end{proof}

The proposition supplies the marginal reward moments used by the per-transition KL terms. When the KL is instead evaluated \emph{jointly} over a batch of transitions (Appendix~\ref{app:kl}), we also need the covariances between the rewards at different state-action pairs. These follow from the same affine-map argument; we state the result for the empirical-transition setting of Section~\ref{ssec:expectation}, in which the joint KL is used and each pair comes with a single sampled successor state.

\begin{corollary}[Joint reward posterior]
\label{cor:joint-reward}
Let $\{(s_i,a_i)\}_{i=1}^m$ be state-action pairs with respective successor states $\{s_i'\}_{i=1}^m$ and per-pair discounts $\gamma_i$ (where $\gamma_i=\gamma$, except $\gamma_i=0$ for a terminal transition, making $R_i=Q^*(s_i,a_i)$ exactly). Conditional on the Clark approximations, the rewards $R_i := Q^*(s_i,a_i)-\gamma_i \hat V^*(s_i')$ are jointly Gaussian with means $\mu_R(s_i,a_i)$ given by Equation~\ref{eq:reward-mean} and covariance matrix
\begin{equation}
\label{eq:joint-reward-cov}
    [\Sigma_R]_{ij} = \mathrm{Cov}\bigl(Q^*(s_i,a_i),Q^*(s_j,a_j)\bigr)
    - \gamma_j\, C_{QV}(s_i,a_i;s_j')
    - \gamma_i\, C_{QV}(s_j,a_j;s_i')
    + \gamma_i\gamma_j\, C_{VV}(s_i',s_j'),
\end{equation}
where for $i\neq j$ the term $C_{VV}$ is evaluated by Equation~\ref{eq:cvv} and on the diagonal by the exact Clark variance, $C_{VV}(s_i',s_i')=\sigma^2_{V^*}(s_i')$, so that $[\Sigma_R]_{ii}$ recovers the marginal variance of Proposition~\ref{prop:reward-gaussian}.
\end{corollary}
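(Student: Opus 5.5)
The plan is to repeat the affine-map argument from the proof of Proposition~\ref{prop:reward-gaussian}, now in vector form over the whole batch. Conditional on the Clark approximations, I would collect all the relevant Gaussian quantities into one vector
\begin{equation*}
\mathbf z = \bigl(Q^*(s_1,a_1),\dots,Q^*(s_m,a_m);\;\hat V^*(s_1'),\dots,\hat V^*(s_m')\bigr)^\top .
\end{equation*}
This vector is jointly Gaussian, with the same block entries as before: Q--Q covariances from Equation~\ref{eq:q-mean-cov}, Q--V covariances $C_{QV}$ from Equation~\ref{eq:cqv}, and V--V covariances $C_{VV}$ from Equation~\ref{eq:cvv}. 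The rewards are then $\mathbf R = A\mathbf z$ with $A = [\,I_m \;\; -\mathrm{diag}(\gamma_1,\dots,\gamma_m)\,]$. The linear-transformation fact already used in the proof of Proposition~\ref{prop:reward-gaussian} gives that $\mathbf R$ is multivariate normal with mean $A\mu_{\mathbf z}$ and covariance $A\Sigma_{\mathbf z}A^\top$.

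The mean reads off as $\mu_{Q^*}(s_i,a_i)-\gamma_i\mu_{V^*}(s_i')$. This is Equation~\ref{eq:reward-mean} with the expectation replaced by its single sample, as in Section~\ref{ssec:expectation}. For terminal transitions $\gamma_i=0$, which gives $R_i=Q^*(s_i,a_i)$.

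For the covariance I would expand entry $(i,j)$ of $A\Sigma_{\mathbf z}A^\top$ bilinearly:
\begin{equation*}
\mathrm{Cov}(R_i,R_j)=\mathrm{Cov}(Q_i,Q_j)-\gamma_j\mathrm{Cov}(Q_i,\hat V_j')-\gamma_i\mathrm{Cov}(\hat V_i',Q_j)+\gamma_i\gamma_j\mathrm{Cov}(\hat V_i',\hat V_j').
\end{equation*}
Here $Q_i:=Q^*(s_i,a_i)$ and $\hat V_j':=\hat V^*(s_j')$. I would then identify each term with the quantities in the statement:
\begin{itemize}
\item $\mathrm{Cov}(Q_i,\hat V_j')=C_{QV}(s_i,a_i;s_j')$, by Equation~\ref{eq:clark-linear} applied at $s_j'$ with $Z=Q_i$.
\item The third term is symmetric to the second, using covariance symmetry.
\item The last term equals $C_{VV}(s_i',s_j')$, by applying Equation~\ref{eq:clark-linear} twice, exactly as in the proposition's proof.
\end{itemize}
This yields Equation~\ref{eq:joint-reward-cov}.

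The step that needs the most care is the V--V block when $s_i'=s_j'$. On the diagonal ($i=j$), the modelling choice is to use the moment-matched variance $\sigma^2_{V^*}(s_i')$ rather than the bilinear surrogate, mirroring the proposition. One caveat I would state explicitly: if $i\neq j$ but $s_i'=s_j'$, consistency requires using $\sigma^2_{V^*}$ there too. This is because both entries then refer to the same random variable $\hat V^*(s_i')$. Otherwise $\Sigma_{\mathbf z}$ would assign two different variances to one variable.

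Positive semidefiniteness of $\Sigma_{\mathbf z}$ is a further concern. With the mixed exact and surrogate entries it is not automatic, so I would note that the corollary, like Proposition~\ref{prop:reward-gaussian}, holds conditional on the Clark approximation defining a valid joint Gaussian. To argue it is typically valid, I would observe that the exact Clark variance exceeds the surrogate variance (see the footnote after Equation~\ref{eq:clark-linear}). Replacing surrogate diagonal entries by larger exact ones therefore adds a PSD diagonal to the PSD surrogate covariance, which preserves PSD-ness.

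Finally, setting $i=j$ in Equation~\ref{eq:joint-reward-cov} gives $\sigma^2_{Q^*}-2\gamma_i C_{QV}+\gamma_i^2\sigma^2_{V^*}$. This is the single-sample form of Equation~\ref{eq:reward-var-clark}, which confirms consistency with Proposition~\ref{prop:reward-gaussian}.
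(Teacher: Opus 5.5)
Your proposal is correct and follows the paper's own proof: the paper likewise stacks $\mathbf z=\bigl(Q^*(s_1,a_1),\dots,Q^*(s_m,a_m);\,\hat V^*(s_1'),\dots,\hat V^*(s_m')\bigr)^\top$, applies $A=\bigl(I_m,\ -\mathrm{diag}(\gamma_1,\dots,\gamma_m)\bigr)$, and reuses the proposition's identification of the $C_{QV}$ and $C_{VV}$ blocks. Your two extra remarks are sound and go slightly beyond the paper: the point about repeated successor states ($s_i'=s_j'$ with $i\neq j$) addresses a case the statement's ``for $i\neq j$ use Equation~\ref{eq:cvv}'' glosses over, and your positive semi-definiteness argument (a non-negative lift added to a PSD surrogate) mirrors the paper's separate paragraph after the corollary, which clamps the lift at zero rather than relying on the footnote's ``generally''.
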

\begin{proof}
    Identical to the proposition: stack $\mathbf z=\bigl(Q^*(s_1,a_1),\dots,Q^*(s_m,a_m);\,\hat V^*(s_1'),\dots,\hat V^*(s_m')\bigr)^\top$ and apply the affine map $A=\bigl(I_m,\ -\mathrm{diag}(\gamma_1,\dots,\gamma_m)\bigr)$.
\end{proof}

\paragraph{Positive semi-definiteness.}
If every entry of Equation~\ref{eq:joint-reward-cov}, including the diagonal, is evaluated through the linear surrogate, then $\Sigma_R = B\,\Sigma_{\bq}B^\top$, where $\Sigma_{\bq}$ is the joint Gaussian covariance of all the involved Q-values and the $i$-th row of the weight matrix $B$ assigns weight $1$ to $Q^*(s_i,a_i)$ and $-\gamma_i\, w_j(s_i')$ to each $Q^*(s_i',a_j)$; $\Sigma_R$ is therefore positive semi-definite by construction. Lifting the diagonal to the exact Clark variances, as in Corollary~\ref{cor:joint-reward}, amounts to adding the residuals $\gamma_i^2\bigl(\sigma^2_{V^*}(s_i') - \sum_{j,l} w_j(s_i')\,w_l(s_i')\,\mathrm{Cov}\bigl(Q^*(s_i',a_j),Q^*(s_i',a_l)\bigr)\bigr)$ to the diagonal, clamped below at zero; since only non-negative values are added to the diagonal, positive semi-definiteness is preserved.\footnote{In practice, we additionally symmetrise $\Sigma_R$ and add a small jitter to its diagonal before the Cholesky factorisation used in the KL computation. A cheaper variant of the joint covariance would skip the diagonal lift and use the purely bilinear (surrogate) form throughout, at the cost of underestimating the marginal variances.} This construction -- propagating maxima of correlated Gaussians in a linear canonical form while moment-matching the marginal distributions -- parallels the canonical delay models used to propagate correlated maxima at scale in statistical static timing analysis \citep{visweswariah2004}.

\subsection{KL and the loss}\label{app:kl}
The loss (Eq.~\ref{eq:elbo}) contains the term $\text{KL}(q_R(R;\theta) || p(R))$, whose purpose is to minimize the distance of the implicit variational posterior over rewards, $q_R(R;\theta)$, to the prior $p(R)$. In tabular environments, we use a multivariate Gaussian prior; in other environments, we assume, more generally, a Gaussian process prior, which can be evaluated on any finite set of state-action pairs to get a multivariate Gaussian prior over those. Similarly, Section~\ref{ssec:implicit-reward} explains how we can obtain a Gaussian posterior over rewards corresponding to the trained variational posterior over Q-values.

Unless the state-action space is very large, the tabular case is trivial: the KL between two multivariate Gaussian distributions with respective means $\mu_1,\mu_2\in\R^d$ and covariance matrices $\Sigma_1,\Sigma_2\in\R^{d\times d}$ can be calculated analytically as
\begin{equation}
\label{eq:kl-gaussians}
    \text{KL}\bigl(\mathcal{N}(\mu_1,\Sigma_1)\,\|\,\mathcal{N}(\mu_2,\Sigma_2)\bigr)
    = \frac{1}{2}\left[
        \operatorname{tr}\!\bigl(\Sigma_2^{-1}\Sigma_1\bigr)
        + (\mu_2-\mu_1)^\top \Sigma_2^{-1} (\mu_2-\mu_1)
        - d
        + \ln\frac{\det\Sigma_2}{\det\Sigma_1}
    \right],
\end{equation}
where $d$ is the dimension, $\operatorname{tr}(\cdot)$ the trace, and $\det(\cdot)$ the determinant. Taking $\mathcal{N}(\mu_1,\Sigma_1)=q_R(R;\theta)$ and $\mathcal{N}(\mu_2,\Sigma_2)=p(R)$, applied to the full joint prior and posterior, gives the required KL term of the loss function. Note that in this tabular case, the KL is evaluated across all state-action pairs, not just the expert trajectories. The ELBO-like loss thus preserves its natural balancing of the likelihood and KL inherited from Bayes' theorem. 

In continuous spaces, we use a Gaussian process prior, and the variational posterior over Q-values implicitly induces a Gaussian posterior over rewards, whose joint mean and covariance over any batch of transitions are given by Corollary~\ref{cor:joint-reward}. The KL term should again encourage closeness between the reward prior and posterior everywhere, not just on the expert trajectories. We use a solution that is standard in VI: together with each mini-batch $B_{\text{expert}}$ of expert data, we also sample a batch $B_{\text{aux}}$ of auxiliary transitions -- these can be sampled from rollouts of the current apprentice policy, an offline dataset, or just be sampled randomly from the state-action space, if we have the requisite level of access. We then calculate the KL from Eq.~\ref{eq:kl-gaussians} jointly over both the expert data and auxiliary data, i.e. over the joint batch $B_{\text{all}}=B_{\text{expert}} \cup B_{\text{aux}}$. We normalize per-point and weight this by a coefficient $\lambda$ to combine this with the likelihood in the continuous, mini-batched version of the loss
\begin{equation}
    \mathcal{L}_{\text{cont}}\biggl(B_{\text{expert}}, B_{\text{aux}}\biggr) = \frac{\lambda}{|B_{\text{all}}|}KL\biggl(q_R(R_{B_{\text{all}}};\theta)\bigg|\bigg|p(R_{B_{\text{all}}})\biggr) - \sum_{s,a\in B_{\text{expert}}} \log p(a|s;\theta)\;,
\end{equation}
where $R_{B_{\text{all}}}$ is a random vector representing the unknown reward at the state-action pairs in $B_{\text{all}}$, with the multivariate Gaussian $q_R(R_{B_{\text{all}}};\theta)$ given by Corollary~\ref{cor:joint-reward}.

Relative to the tabular case, this unfortunately introduces the hyperparameter $\lambda$, since we no longer get the natural weighting of the likelihood and KL implied by the Bayes' theorem. Using trainable inducing points is an alternative that can preserve the weighing and may be worth exploring further, but constructing sufficiently dense inducing points in higher-dimensional spaces is challenging and can introduce other approximation errors.

\subsection{State-only reward variant}
Some environments use rewards that are function of only the state, not the full state-action pair. In theory, QVIRL can model this in its default state-action-dependent model using strong correlations between rewards of all actions in each state to approximate the actual perfect correlation arbitrarily closely. However, this can cause numerical issues -- in the tabular case, QVIRL would be modeling uncertainty over the $|\sts|$-dimensional reward space using a distribution over $|\sts|\times|\as|$ Q-values, causing the joint posterior to be singular. However, this can easily be resolved by instead working in terms of optimal state values $V^*$. 

In the tabular case, the variational posterior can be defined in terms of the means of those state values and their covariance matrix. The rewards can then be derived as
\begin{equation*}
    R(s) = V^*(s) - \gamma \max_a \E_{s'|s,a} V^*(s')
\end{equation*}
and their distribution then approximated as Gaussian using either of the two proposed approximations to the maximum of jointly Gaussian random variables: Clark or max-mean.
The Q-values can then be derived using the linear equation
\begin{equation*}
    Q^*(s,a) = R(s) + \gamma \E_{s'|s,a}V^*(s')
\end{equation*}
yielding again an approximate posterior Gaussian distribution, which can be used to evaluate the likelihood using Lu's approximation (Eq.~\ref{eq:mean-field-action-prob}).
\section{Approximations: Further Notes and Illustration}
\label{app:approximations}

\subsection{Max-Mean Approximation}

\begin{figure}
    \centering 
    \includegraphics[width=0.6\textwidth]{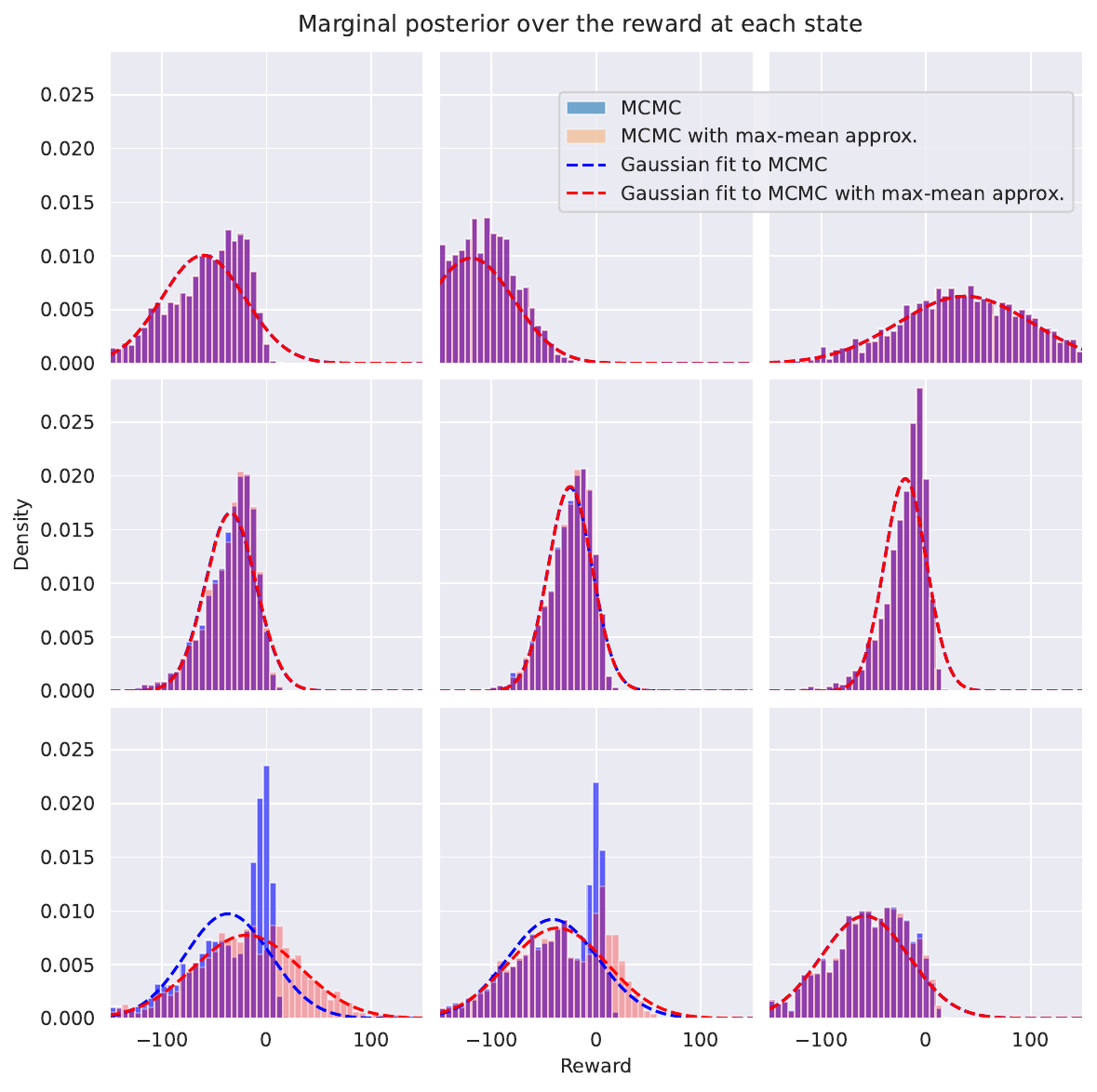} 
    \caption{Effect of the approximation in Eq.~\ref{eq:bellman-reward} on the reward posterior. The plot shows the MCMC posterior distribution over rewards for each state, as well as samples deduced from the MCMC posterior over rewards deduced using the approximation from the posterior over state-values. Dashed lines show Gaussians fitted to the two MCMC samples. The results coincide except in the bottom left and middle squares.}. 
    \label{fig:maxmean}
\end{figure}

In \eqref{eq:bellman-reward}, when approximating the reward distribution corresponding to the current posterior over optimal Q-values, we suggest, as a simpler alternative to Clark's approximation, replacing the distribution of the state-value of the next state, i.e., the maximum of the optimal Q-values for that state, by a closed-form surrogate in order to benefit from computational speed-ups at the cost of approximation error. Whilst the main paper proposes Clark's approximation, there is an even simpler alternative one could consider: replacing the maximisation operation over successor state optimal Q-values, with the distribution of the optimal Q-value for the action with the highest posterior mean. We call this approximation the \emph{max-mean} approximation.

More formally, in a Bellman backup, we replace
\begin{equation*}
	R(s,a) = Q^*(s,a) - \gamma \,\E_{s'\sim p(\cdot|s,a)}\!\left[\max_{a'} Q^*(s',a')\right].
\end{equation*}
with the following:
\begin{equation}
	R(s,a) = Q^*(s,a) - \gamma \,\E_{s'\sim p(\cdot|s,a)}\!\left[ Q^*\left(s',\argmax_{a'}\mu_Q(s',a')\right)\right].
\end{equation}

Figure~\ref{fig:maxmean} illustrates the effect of the approximation in a 3x3 gridworld. The approximation does not introduce any error in most states. This is because the posterior over optimal Q-values in the successor states to these states clearly favours a single action and thus the distribution of the state value coincides with the distribution of the highest-mean action. On the other hand, we can see the approximation introducing an error in two states at the bottom left. This is because the algorithm's uncertainty about what the optimal action in the next state is, so the state-value does not coincide with the distribution of the max-mean optimal Q-value. Consequently, the algorithm slightly underestimates the expected value of the next state and overestimates the reward.

\subsection{Likelihood Approximation}

\begin{figure}
    \centering 
    \includegraphics[width=0.49\textwidth]{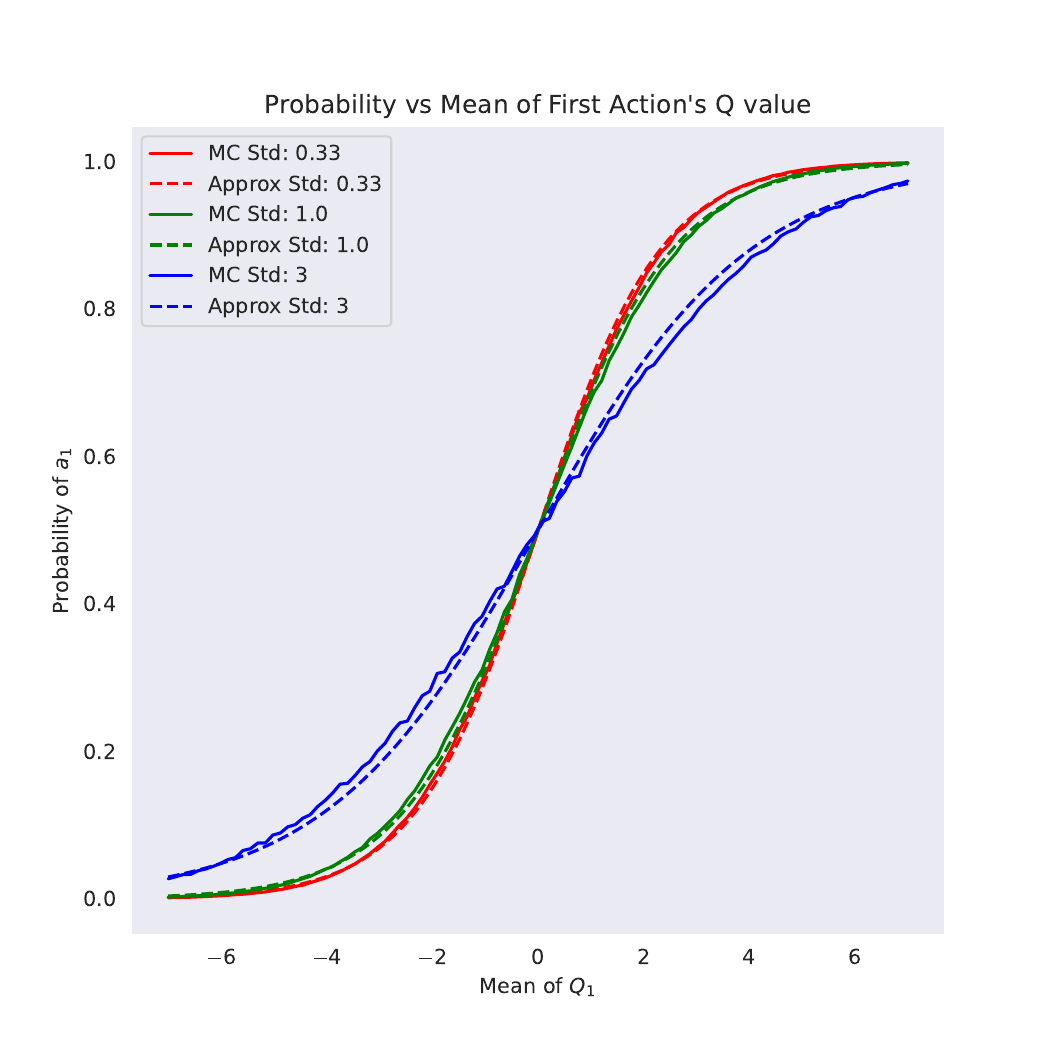}  \includegraphics[width=0.49\textwidth]{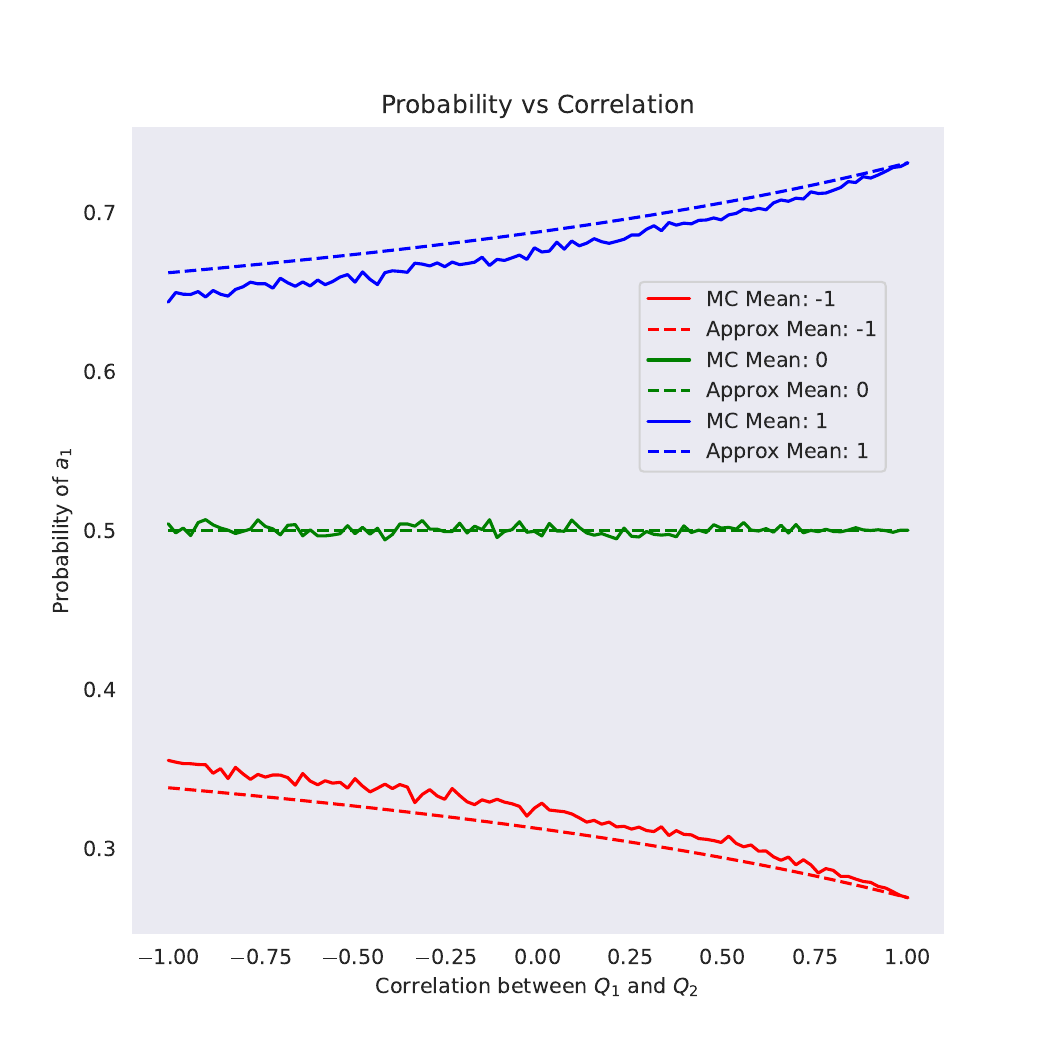}
    \caption{Comparison of the likelihood approximation \eqref{eq:mean-field-action-prob} against a Monte Carlo evaluation using 10,000 Monte Carlo samples. Left: probability of $a_1$ as a function of the mean of $Q_1$ for 3 levels of standard deviation of $Q_1$. Correlation is set to 0 (the plot looks very similar for correlated values). Right: Probability of $a_1$ as a function of the correlation between $Q_1$ and $Q_2$ for three levels of the mean. Standard deviation is fixed to 1. Note the different scales on y-axes of the two plots.} 
    \label{fig:likelihood-approximation}
\end{figure}

Following~\citet{lu2021}, in Equation~\ref{eq:mean-field-action-prob}, we approximate the likelihood 
\begin{equation*}
    p(a|s;\theta) = \int_{Q(s,\cdot)\in\R^{|\as|}} \frac{e^{\beta Q(s,a)}}{\sum_{a'}e^{\beta Q(s,a')}} q_\theta(Q(s,\cdot)) \; d Q(s,\cdot)
\end{equation*} 
as
\begin{equation*}
    p(a|s;\theta)\approx \scalebox{0.925}{$\left(\sum_{a'} \exp\left(- \frac{\beta\left(\mu_Q(s,a) - \mu_Q(s,a')\right)}{\sqrt{1+ 3\pi^{-2} \beta^2 (\sigma_Q(s,a)^2+\sigma_Q(s,a')^2-2\Sigma_{aa'}) }}\right) \right)^{-1}$}.
\end{equation*}

To examine the quality of the approximation, we compare the probability of the action under the approximation against an approximation of the integral using Monte Carlo integration with 10,000 samples. In the test scenario, there are two actions, $a_1$ and $a_2$ with respective Q-values $Q_1$ and $Q_2$ that are jointly-normally distributed. We fix the marginal distribution of $Q_2$ to standard normal. We then try varying the mean and standard deviation of $Q_1$ as well as the correlation between $Q_1$ and $Q_2$.

Figure~\ref{fig:likelihood-approximation} shows the results. The approximation seems reasonably good, though it gets worse as correlation decreases. 

Note that even with 10,000 samples, the Monte Carlo estimate displays notable noise. While we could use sampling and the reparameterization trick during training, we observed that this destabilizes training and we did not manage to get competitive results on any of the benchmarks. Thus we recommend using the proposed approximation that can be evaluated analytically.

\subsection{Gaussianity}

The Gaussianity assumption was illustrated in the previous example. Fig.~\ref{fig:3x3-gridworld-multirew} shows that for a Gaussian prior, the posterior stays reasonably close to Gaussian. The use of variational inference requires an assumption on the distribution family and the Gaussian is a natural starting point, on which further work can build to expand to other distribution families, especially if practical cases arise where the prior and associated posterior are notably non-Gaussian.
\section{Additional Discussion on AVRIL}

Here, we provide additional details about our modifications to AVRIL that we implemented for fair comparison between the methods. Additionally, we give a visualisation of AVRIL's shortcomings in a $3\times3$ gridworld setting, and conclude with a short summary of limitations of AVRIL as a Bayesian IRL method.

\subsection{Dynamics-Aware AVRIL}
\label{app:avril-online}
AVRIL was originally proposed as an offline algorithm not using the environment dynamics or auxiliary transition. However, in online settings, this would lead to AVRIL never updating the posterior over the reward associated with states visited during environment rollouts, as AVRIL only updates reward posteriors of states that appear in the demonstrations. Here we assume the knowledge of dynamics, so for fair comparison we introduce a dynamics-aware version of AVRIL.

The main change is that instead of evaluating both the TD term and the KL divergence between the reward variational posterior and the prior only on the expert trajectories, we evaluate it across all state-action pairs, i.e. for each state-action pair $(s,a)$, we calculate
\begin{equation*}
    R(s,a) = Q^*_\theta(s,a) - \gamma \E_{s'|s,a} V^*_\theta(s'),
\end{equation*}
where $V^*_\theta(s')=\max_{a'} Q^*_\theta(s',a')$. Then the TD soft constraint is calculated as
\begin{equation}
    \sum_{s,a\in\sts\times\as} q_\phi(R(s,a)).
\end{equation} Similarly, we evaluate the KL divergence between $q_\phi$ and the prior $p(R)$ across all states and actions rather than just on the expert trajectories.

\subsection{3x3 Gridworld and a Comparison to AVRIL}
\label{app:avril-gw}

\begin{figure}
    \centering 
    \includegraphics[width=0.4\textwidth]{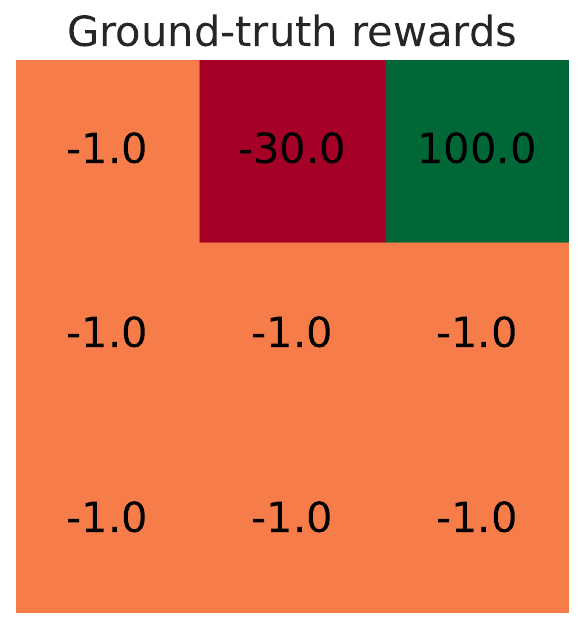} \includegraphics[width=0.4\textwidth]{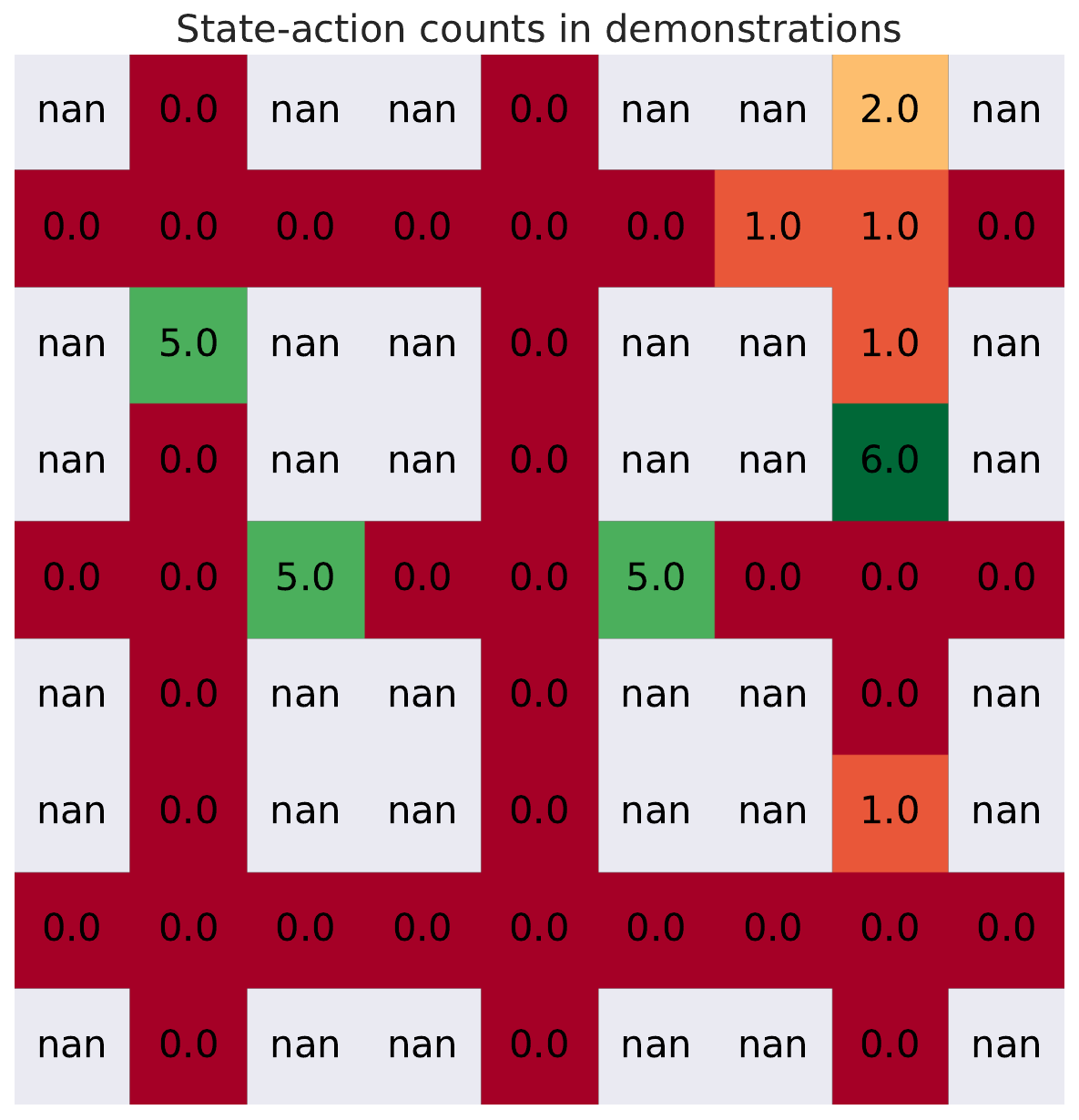} \\\includegraphics[width=0.48\textwidth]{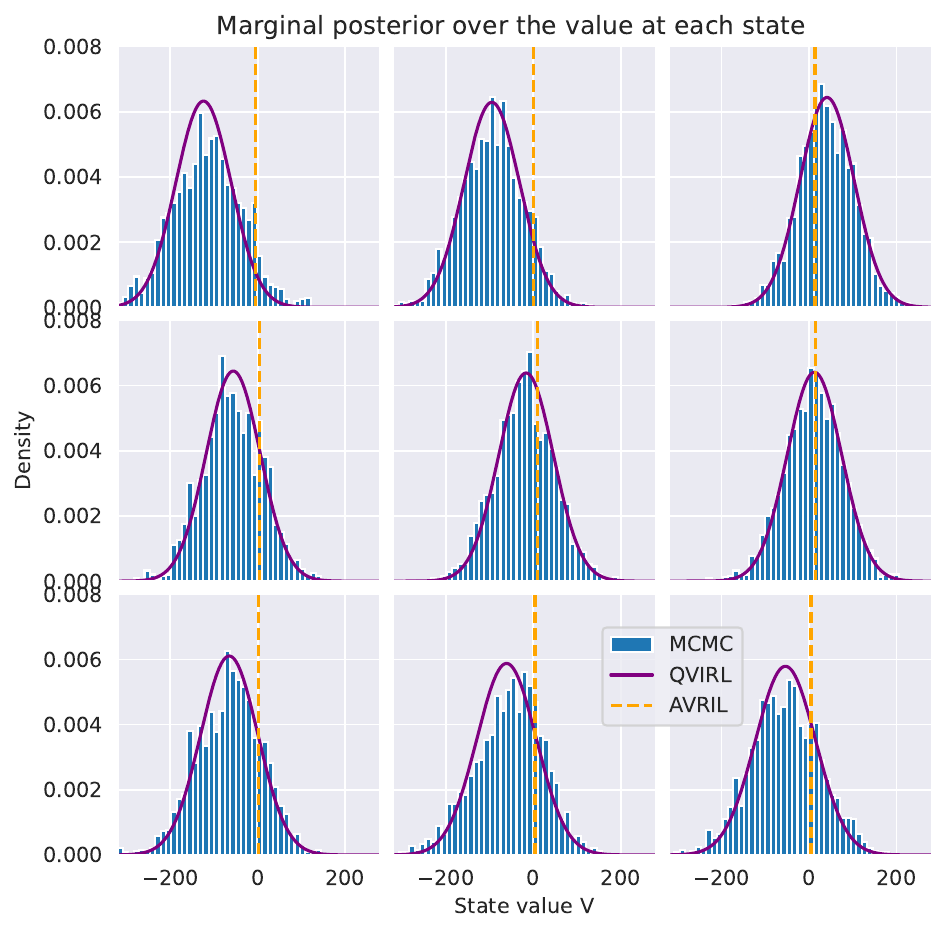} \; \includegraphics[width=0.48\textwidth]{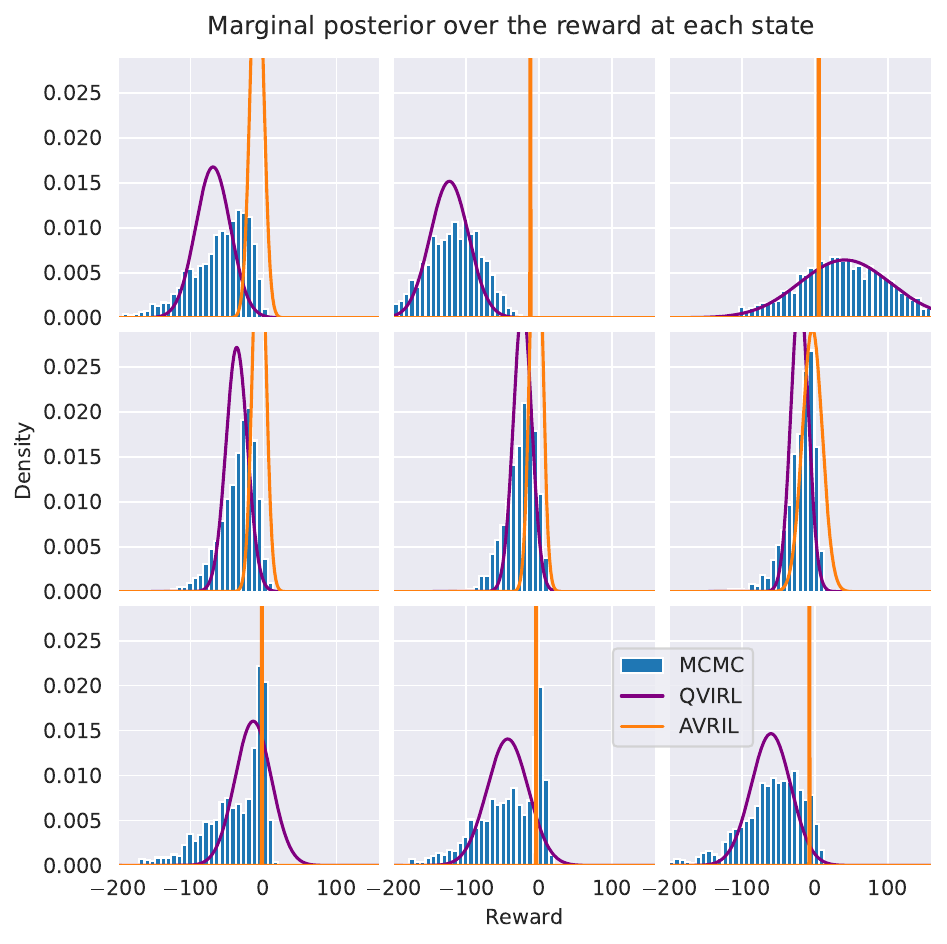}
    \caption{Top left: Ground truth rewards for a 3x3 gridworld. The top left state is the initial state. Top right state is terminal. Top right: State-action counts in the demonstration data. Bottom left: pdf of the marginal posterior over state values recovered by QVIRL, a histogram of 2000 samples from the true posterior produced by ValueWalk, and a horizontal line indicating the point estimate recovered by AVRIL. Bottom right: pdfs of the variational posterior over reward recovered by QVIRL and AVRIL and the corresponding histogram.}
    \label{fig:3x3-gridworld-multirew}
\end{figure}

To visualise the shortcomings of AVRIL when it comes to recovering true posteriors of the state values, we conducted an experiment in a simple $3\times3$ gridworld shown in Figure~\ref{fig:3x3-gridworld-multirew}. We treated all state-only rewards as unknown. We assume an independent normal prior $\mathcal{N}(0,66)$ for the reward in every state. There is also noise in the environment dynamics, so for any action, there is a $0.1$ probability of slipping and moving in a random direction or staying in place (with uniform probability between the 5 options). The top right state yields a reward of 100, while the middle top tile represents a hazardous obstacle with a reward of -30.

We provide the algorithms with 5 trajectories, each starting in the top left corner and heading to the goal via the middle tile while avoiding the obstacle. The expert once slipped to the bottom right tile placing one demonstration step there.

We ran QVIRL and a dynamics-aware version of AVRIL (described in Appendix~\ref{app:avril-online} above) using these demonstrations, as well as ValueWalk~\citep{bajgar2024} that produces samples from the true posterior.

Figure~\ref{fig:3x3-gridworld-multirew} shows that QVIRL variational posterior approximates the true posterior well (as is also the case in Section~\ref{sec:experiements}), while the AVRIL point estimate remains very close to zero (which, however, still produces action predictions consistent with the data). The fit of the derived posterior over the rewards is worse -- some of the posteriors are visibly further from Gaussian (many exhibiting negative skew) than their value counterparts. Still, the derived Gaussian posterior over rewards recovered by QVIRL still roughly matches the means and variances of the MCMC samples. On the other hand, the posterior over rewards recovered by AVRIL remains centred near zero in almost all cases. Also, in 5 of the 9 states, the standard deviation of the posterior collapses to below 0.2, where it should be between 40 and 64 to match the MCMC-based posterior. 

Furthermore, as discussed above, we found AVRIL to be extremely sensitive to the value of its hyperparameter $\lambda$ that regulates the strength of the constraint on consistency between the Q-values and the reward posterior. In the above experiments we used a value of $\lambda=0.2$, however, increasing it to $0.5$ makes the variance of all states collapse to near zero (std<0.001 for 8 states out of 9), while decreasing to $0.1$ essentially reverts the posterior to the prior. The AVRIL paper gives no indication of how to pick a good value for $\lambda$.

\subsection{Limitations of AVRIL}

Since QVIRL may seem to resemble AVRIL, and AVRIL can be considered as the closest baseline, we would like to re-emphasize some important limitations of AVRIL that our method does not suffer from:
\begin{itemize}
    \item AVRIL does not provide uncertainty estimates over the optimal Q-values and thus over the optimal policy. This does not permit one to easily extract risk averse policies. On the other hand, doing so is natural in QVIRL, which gives us a downstream use gain for comparable computational effort.
    \item AVRIL's posterior over rewards does not seem to track well the true posterior over rewards even on a simple gridworld. QVIRL's fit is faithful to the posterior recovered by MCMC methods.
    \item the posterior variance over rewards is extremely sensitive to AVRIL's hyperparameter $\lambda$ that regulates the strength of the consistence constraint between the reward variational distribution and the Q-values and a narrow range of values can make the posterior either collapse to Dirac delta-like function, or revert to the prior. The paper gives no indication on how to pick the value. QVIRL has no such hyperparameter, and is robust even to other potentially challenging design choices such as the weighting of KL divergence and log-likelihood in ELBO.
    \item the AVRIL paper presents a "strictly batch setting" where only the expert demonstrations are used to estimate environment transitions, and thus evaluate the closeness of the posterior to the prior and the consistency between the Q-values and the reward distribution. We think this largely gives up a crucial advantage of IRL over behavioural cloning -- leveraging the environment dynamics to make inference about rewards away from the expert trajectories, enabling better generalization. Our paper presents variants of the algorithm also for the case of known environment dynamics, or an online setting where we have access to a simulator or the environment itself.
\end{itemize}
\section{Experiment Details}
\label{app:exp-details}

\subsection{Environment Descriptions}

\begin{figure}[htpb]
\subfloat[Gridworld environment.]{\label{fig:env-gw}
\centering
\includegraphics[width=0.4\linewidth]{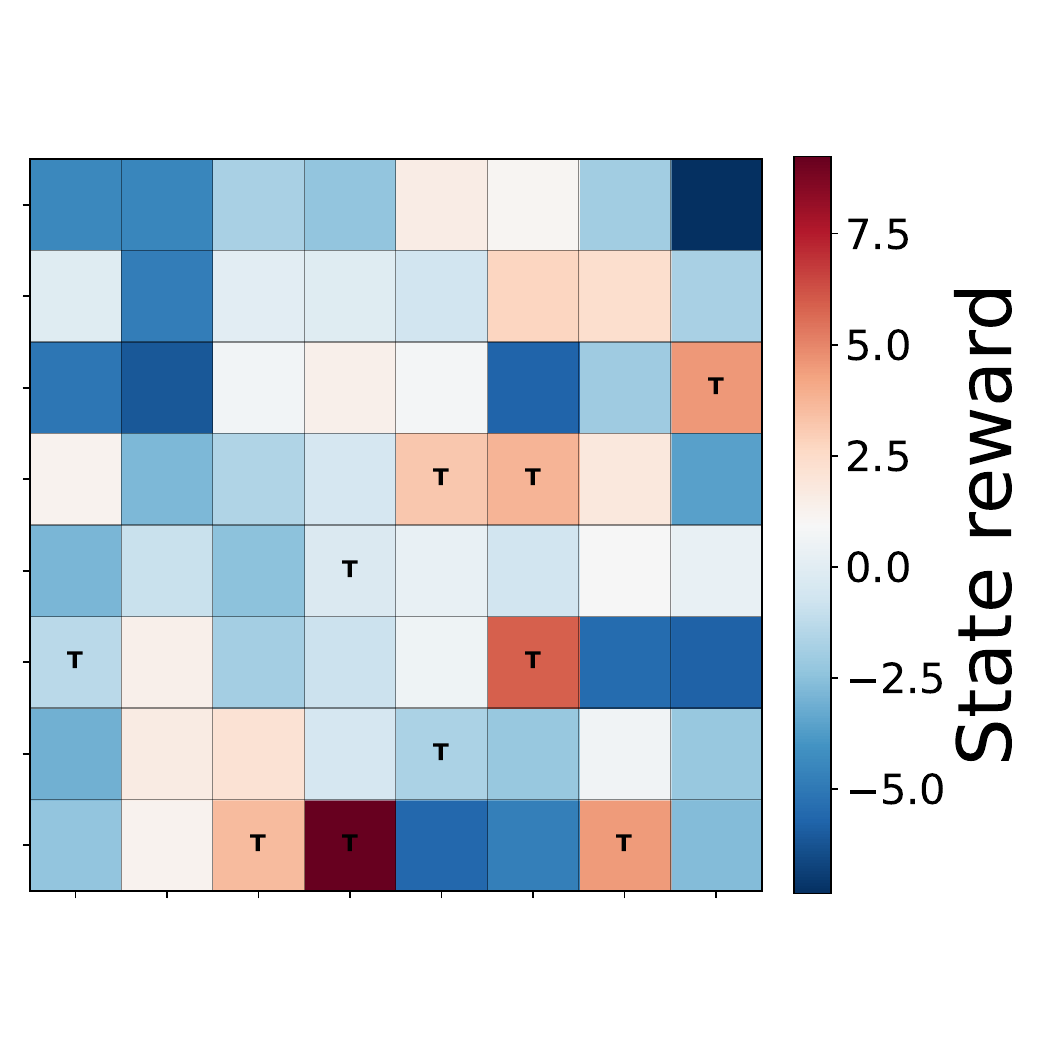}
}
\hfill
\subfloat[Atari Pong environment.]{\label{fig:env-pong}
\centering
\includegraphics[width=0.27\linewidth]{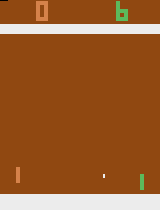}
}
\hfill
\subfloat[Atari Space Invaders.]{\label{fig:env-si}
\centering
\includegraphics[width=0.27\linewidth]{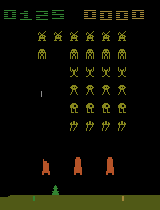}
}

\vfill

\subfloat[Lunar Lander environment.]{\label{fig:env-ll}
\centering

\includegraphics[width=0.45\linewidth]{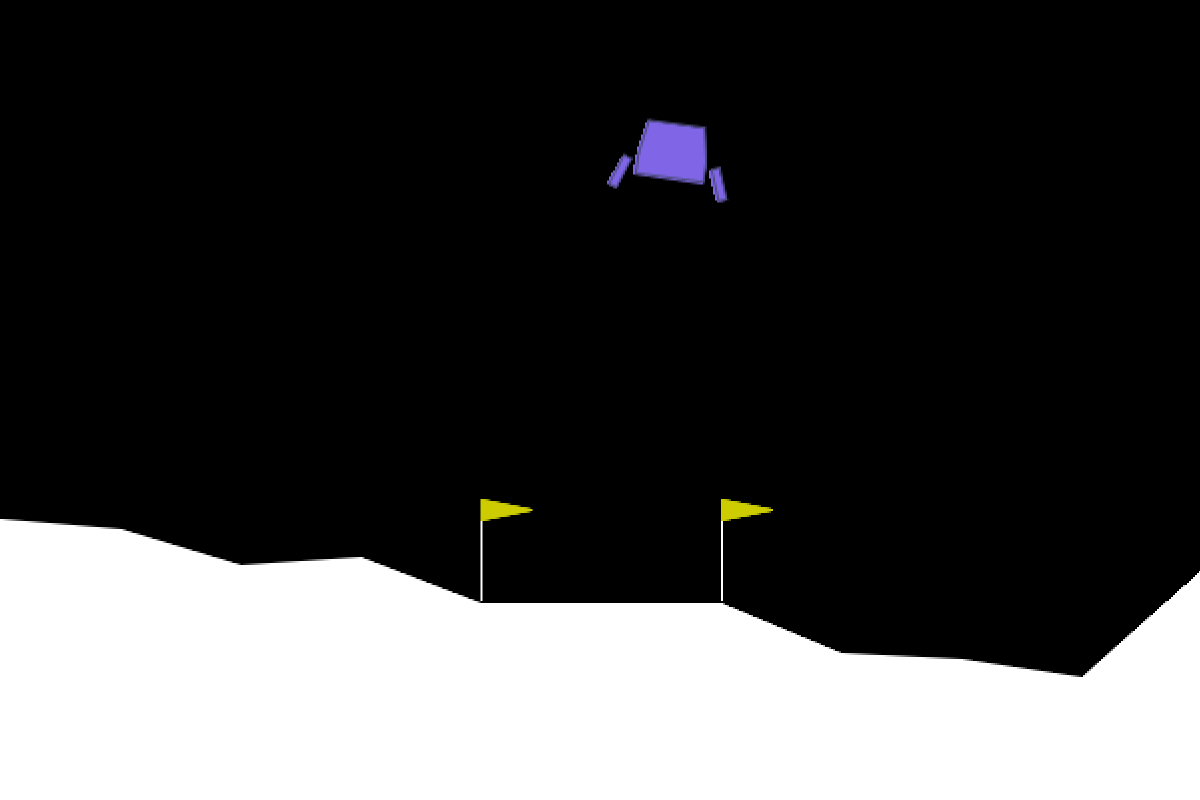}
}
\hfill
\subfloat[Highway Env environment.]{\label{fig:env-he}
\centering

\includegraphics[width=0.45\linewidth]{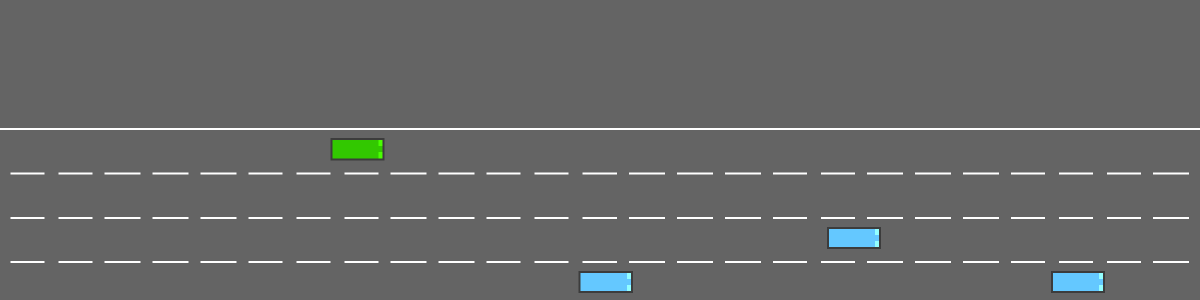}
}

\end{figure}

\paragraph{Gridworld} The gridworld  (Figure~\ref{fig:env-gw}) has 5 actions, corresponding to staying in place and moving in the four directions. Furthermore, there is a probability of 0.1 of random action being executed instead of the intended one. If an action would result in crossing the edge, the agent instead remains in place. The gridworlds use a state-only reward (awarded upon executing any action in the given state). The 100 randomized 8x8 gridworlds were generated as follows:
\begin{enumerate}
    \item Each state was assigned a random reward drawn independently from $\mathcal{N}(-1,3)$ (which can be interpreted as $\mathcal{N}(0,3)$ minus a time penalty of one per step). 
    \item Each state was then marked as terminal with an independent probability of 0.1.
    \item The top $10\%$ of states with highest reward were further marked as terminal (producing terminal goal states, which may, however, sometimes be avoided by the optimal policy in favour of staying forever in other positive states).
    \item The initial state distribution is uniform across the whole state space.
\end{enumerate}

\paragraph{Lunar Lander} Lunar Lander, part of the classic control environments available in Farama foundation's Gymnasium library \citep{towers2024gymnasium}, is a standard benchmark task in reinforcement learning. It models a spacecraft attempting to land on a designated pad located between two flags on a flat surface. The state is an 8-dimensional continuous vector that includes the lander's horizontal and vertical position and velocity, its angle and angular velocity, and indicators of whether the left and right leg are in contact with the ground. We use the discrete variant of the environment, containing 4 actions - a no-op, and control actions of the left and right main and orientation engines. The agent receives positive reward for successfully approaching the landing pad and slowing down. It receives negative reward for excessive tilt and firing of the engines. If the agent successfully lands, it receives an additional reward of $+100$, and if it crashes, it receives a reward of $-100$. An episode is considered solved at an episode return of $+200$, with return of approx. $270$ commonly being considered as expert performance.

For active learning, we implement a version of the environment in which the starting state is resettable from a state vector in order to allow active querying of the expert and targeted rollouts.

\paragraph{Highway Env} Highway Env is a simulated autonomous driving task, also available in Gymnasium, in which an agent has to negotiate a multi-lane highway that is shared with other vehicles. We use the fast version of the environment with kinematic observations and discrete action space. The observed state is a $5 \times 5$ feature vector that contains the 2D position of the ego vehicle (the agent) and its' velocities, as well as the same features for 4 vehicles that are the closest to the ego vehicle in a given state. The actions available to the ego vehicle are a set of discrete meta-actions: change lane (left or right), speed up, slow down, and no-op. The episodes are time-limited to 30 seconds, and the agent receives a reward that is proportional to the difference between its speed and the environment's desired speed. The agent also receives positive reward for keeping in the right-most lane. The agent receives negative reward if it crashes into any other vehicle, or drives off the road. The maximum reward the ego vehicle can obtain is $+30$.

For active learning, we implement a version of the environment in which the starting state is resettable from a true state vector, and we collect demonstrations in the true state space. When training experts and IRL algorithms, we convert the true state space into observations as given by the original environment.

\paragraph{Atari} To test QVIRL's scalability to raw, high-dimensional pixel observations, we evaluate apprenticeship learning on two games from the Arcade Learning Environment \citep{mnih2015}, Pong and Space Invaders, following the evaluation protocol used by IQ-Learn \citep{garg2021}. Observations are pre-processed using the standard Atari pipeline: frames are converted to grayscale and resized to $84\times84$, four frames are skipped between decisions (with max-pooling over the last two skipped frames), and the last four frames are stacked to form the $4\times84\times84$ observation given to the agent, so that short-term dynamics (e.g.\ ball velocity in Pong) are observable from the state alone. The action space is discrete and game-specific. Episode returns are reported on the raw (unclipped) game score, while the reward used for training is sign-clipped as is standard practice for these games. The prior distribution used for the reward is described in Appendix~\ref{app:prior-distribution}.

\subsection{Expert Demonstrations}

To collect expert demonstrations in Gridworlds, we solve for the optimal Q-values using value iteration, and then roll out 5 expert trajectories with a Boltzmann rationality coefficient $\beta=2$ and maximum length 5. In the active learning setting, we start with a single such trajectory and then collect an additional one from the expert in the state queried in each active step.

To collect expert demonstrations in Lunar Lander and Highway Env, we train an expert using QR-DQN \citep{dabney2018}. When the expert's Q-value function has converged, we use it to sample trajectories with different rationality coefficients; $\beta=3$ on Lunar Lander and $\beta=5$ in the highway environment. For each $\beta$, we collect $1000$ trajectories, and randomly split them into $800$ training trajectories, $100$ evaluation trajectories, and $100$ test trajectories. We further split the $800$ training trajectories into $10$ subsets, and we use one training subset during a single method evaluation, or use all $10$ splits across $10$ runs of the same method. Furthermore, we sometimes further restrict the number of trajectories we give to the methods, and in this case we subsample the training splits to create smaller training datasets.

To collect expert demonstrations on Atari, we roll out a Boltzmann-rational policy from a near-optimal pretrained action-value network (an Implicit Quantile Network~\citep{dabney2018a}) for each game, matching IQ-Learn's demonstration budget of 20 training trajectories per game. A further 5 trajectories are held out per game to evaluate action log-likelihood, as in the other environments.

\subsection{QVIRL Training}

In our experiments, we train QVIRL by stochastic gradient ascent using automated differentiation in PyTorch with the Adam optimizer and a learning rate of $0.001$ (using the default values for other parameters). We use the same Boltzmann coefficient as was used to generate the demonstrations, and a discount rate $\gamma=0.99$ for Lunar Lander, $0.95$ for Highway, and $0.9$ for gridworlds. Each run was run for about 20000 iterations with a batch size of 64. Where a neural network encoder is used, we use a multi-layer perceptron with two hidden layers of 128 units and an ELU activation function.
Embedding size of 4 is used in our experiments. We also used weight decay of $0.001$.
We tuned only the learning rate and weight decay, which were chosen by trial and error before the whole set of experiments whose results are reported was run. At most 5 tries were made for each hyperparameter.

On Atari, the state input is first passed through a convolutional (NatureCNN) encoder~\citep{mnih2015} -- three convolutional layers followed by a linear layer -- before entering the same downstream architecture used in the other environments; the encoder is trained end-to-end together with the rest of QVIRL's variational parameters, rather than pretrained or frozen.

\subsection{Prior Distribution}
\label{app:prior-distribution}

In the gridworld experiments, we used an independent Gaussian prior $\mathcal{N}(-1, 3)$ for each state.

In Lunar Lander and Highway, we used a Gaussian-process prior with constant mean $-1$ and an automatic relevance determination (ARD) RBF kernel,
\begin{equation}
    k(x,x') = \sigma_0^2 \exp\left(-\tfrac{1}{2}\sum_d \left(\frac{z_d - z'_d}{\ell_d}\right)^2\right),
\end{equation}
evaluated on standardized features $z = (x - \bar{x})/s_x$, where $x$ concatenates the (flattened) state vector with a one-hot encoding of the action, and $\bar x, s_x$ are the empirical per-feature mean and standard deviation computed once from the expert demonstrations. We set the kernel's marginal standard deviation $\sigma_0$ to 50 on Lunar Lander and 1 on Highway, to roughly track the magnitude of rewards on each environment. The per-feature lengthscales $\ell_d$ are initialized in a data-adaptive way, as the root-mean-square of the lag-one differences of the standardized feature $z_d$ along expert trajectories (differences are never taken across trajectory boundaries). Both the lengthscales and $\sigma_0$ are then treated as learnable parameters, and are optimized jointly with the rest of QVIRL's variational parameters, by the same Adam optimizer, driven end-to-end by the ELBO -- i.e. we do not perform a separate marginal-likelihood (type-II ML) fit of the kernel hyperparameters.

On Atari, we used a zero-mean Gaussian-process prior whose kernel is a product of an action kernel and a weighted mixture of image kernels evaluated on the (grayscale, $84\times84$) frame-stack observation,
\begin{equation}
    k\big((s,a),(s',a')\big) = \sigma_0^2\, k_A(a,a') \sum_j \alpha_j\, \tilde{k}_j(s,s'),
\end{equation}
with each component kernel normalized so that $\tilde k_j(s,s)=1$ and $k_A(a,a)=1$. The mixture combines: (i) a patch-based RBF kernel comparing $8\times8$ image patches (stride 4) at matched spatial locations; (ii) the same patch RBF kernel applied to the difference between consecutive stacked frames, to capture motion (e.g.\ ball or sprite direction); (iii) a parameter-free multi-resolution (spatial-pyramid) kernel on per-cell mean pixel intensities, which captures coarse scene layout; and (iv) a weak RBF kernel on raw pixel intensities, acting as a regularizing baseline. The action kernel $k_A$ is an RBF kernel over hand-specified, per-action semantic features (one-hot encoding each move direction and whether the agent fires), mixed with a small Kronecker-delta term. The mixture weights and the number of pyramid levels are set per game to reflect each game's visual structure -- for Pong, weights of $0.45/0.45/0.05/0.05$ on the patch/motion/pyramid/pixel kernels with pyramid levels $\{0,1,2\}$; for Space Invaders, weights of $0.25/0.35/0.35/0.05$ with pyramid levels $\{0,1,2,3\}$, reflecting the more spatially distributed enemy layout. The patch and pixel-kernel lengthscales are initialized using a median heuristic (the median pairwise distance between patches sampled from the expert demonstrations). We fix the kernel's marginal standard deviation $\sigma_0$ to $0.5$ to roughly track Atari's (clipped) reward magnitude. Unlike in Lunar Lander and Highway, all Atari kernel hyperparameters -- mixture weights and lengthscales alike -- are kept fixed at their (data-informed) initial values rather than trained jointly with the rest of the model.

\section{Additional Experiments}

Here, we provide some additional experiments which are meant to expose and visualise the properties of QVIRL, and shortcomings of AVRIL, discussed in Appendix~\ref{app:avril-online}.

\subsection{Sensitivity of QVIRL to Rationality Coefficient $\beta$}

Whilst QVIRL has relatively few hyper-parameters that require tuning, most current Bayesian IRL methods require the designers to specify the rationality coefficient $\beta$, which controls the sharpness of the softmax operation in modelling the demonstrator's action selection (Equation~\ref{eq:boltzmann-likelihood}). Naturally, the (mis-)specification of $\beta$ influences the reward learning, as learning signal might be biased or lost due to misaligned assumptions.

To study the influence of misspecification of the rationality coefficient $\beta$ on QVIRL's learning process, we identified 4 values of $\beta$ that produced distinct behaviours in gridworlds, ranging from random actions, to optimal experts. We then trained QVIRL by using all values of $\beta$ for each expert. We repeated the experiment across 30 seeds, and we considered 1, 7, and 15 demonstrations. As our metric of interest, we focus on the discrepancy of the held-out greedy expert action likelihood between the demonstrator and the learner, that is, we measure the difference in probability mass assigned to the true optimal action in a state. We report the mean of this metric across all seeds, and a positive difference means the learner places less mass on the optimal action, and vice versa for a negative difference.

\begin{figure}[t]
    \centering
    \includegraphics[width=0.32\linewidth]{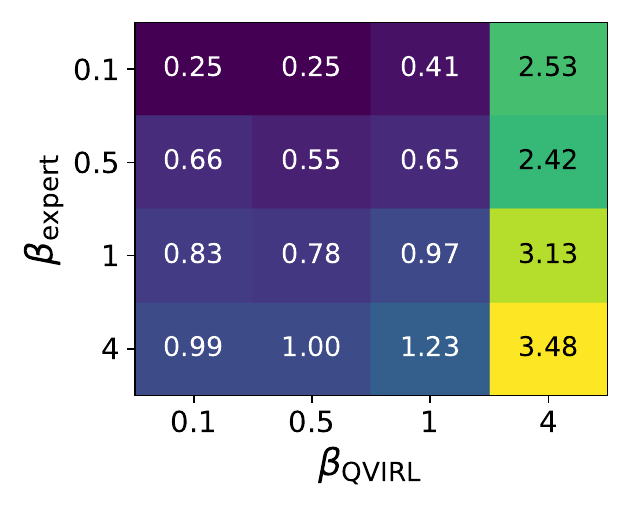}
    \hfill
    \includegraphics[width=0.32\linewidth]{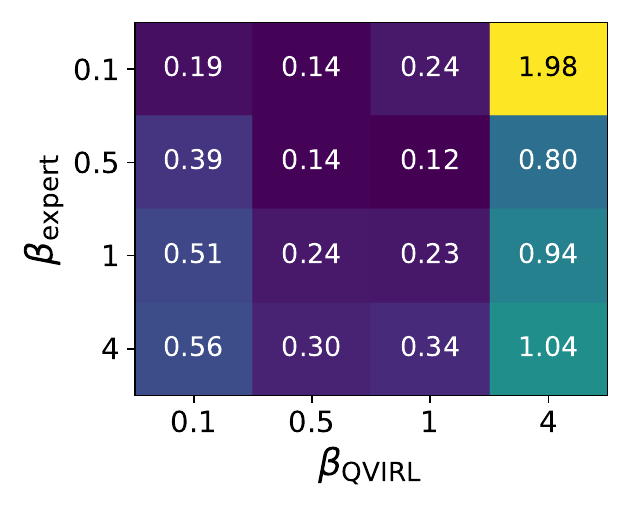}
    \hfill
    \includegraphics[width=0.32\linewidth]{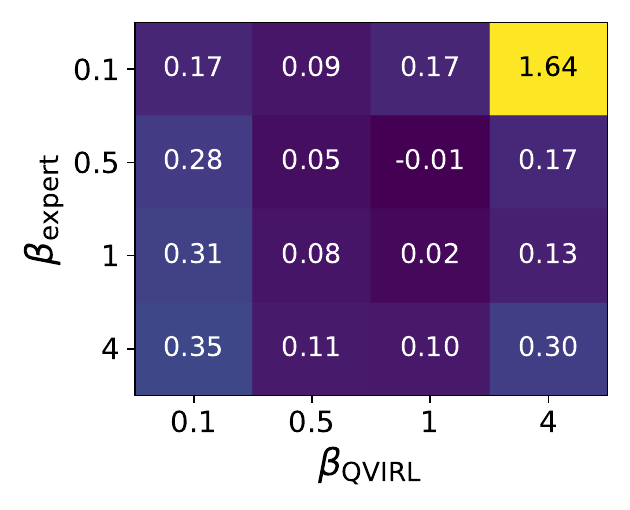}
    \caption{The mean true greedy action log probability difference between the demonstrator and learner for 1, 7, and 15 demonstrations, respectively.}
    \label{fig:diffs}
\end{figure}

As observed in Figure~\ref{fig:diffs}, the interaction between the $\beta$ parameter and learned action probabilities does not only depend on the misspecification, but also on the amount of demonstration data given to QVIRL. In low-data regimes, QVIRL performs better when assuming a less deterministic expert than the true value, and interestingly, best-performance is achieved by deliberately under-estimating the expert's rationality. However, as the number of data increases, its regularisation effect is more pronounced, and the variance is better explained by the rationality coefficient. In terms of absolute values, it appears to be better to slightly underestimate the expert's rationality rather than overestimate it.

An exciting direction for future work is to include the rationality parameter $\beta$ as an explicit part of the inference, along the optimal Q-values, to be estimated from demonstration data.

\subsection{Illustrative Gridworld Experiment with a Single Unknown State}
\begin{figure}
    \centering 
    \includegraphics[width=0.25\textwidth]{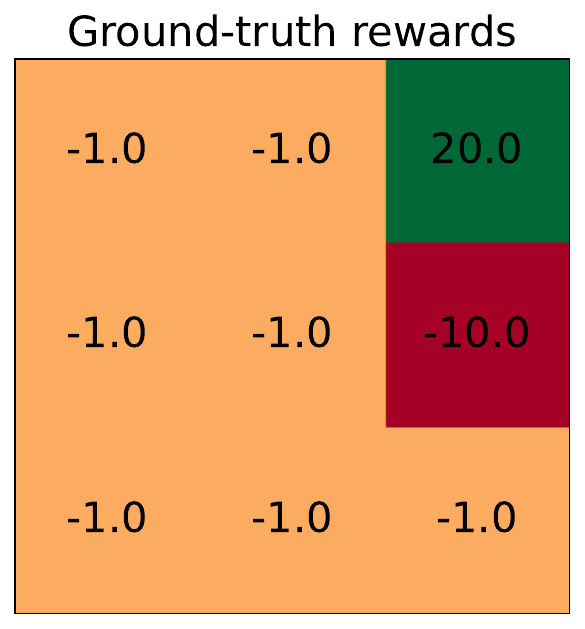} \; \includegraphics[width=0.35\textwidth]{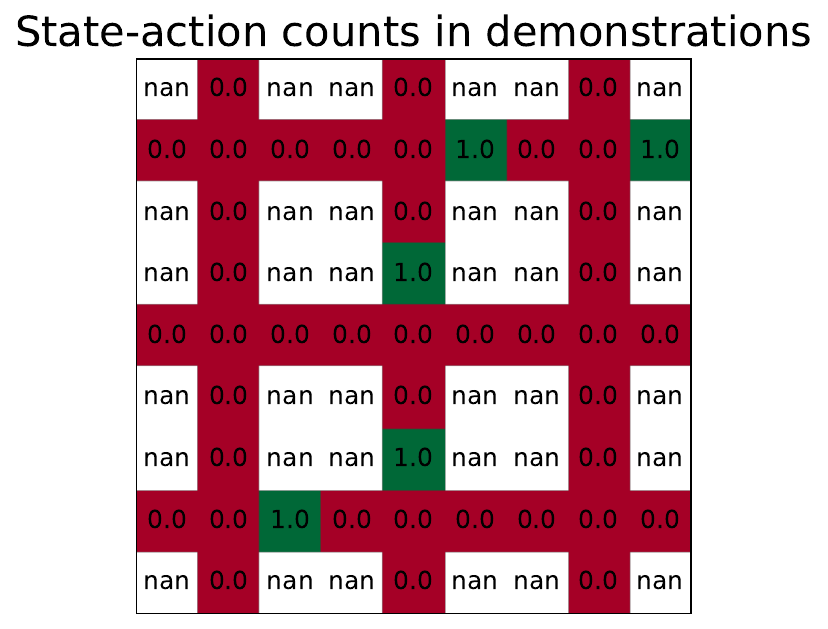} \; \includegraphics[width=0.32\textwidth]{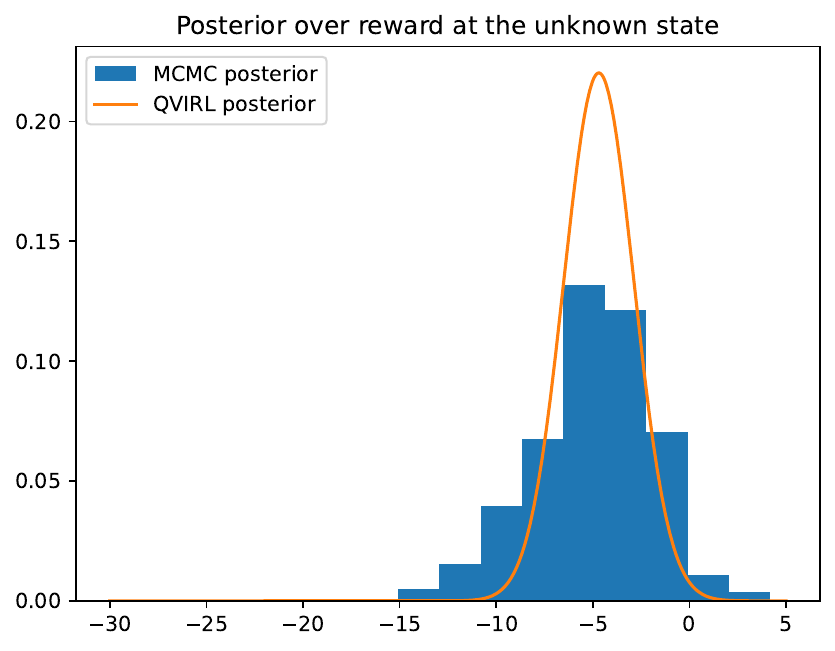}
    \caption{The ground truth rewards, demonstrations, and the posterior over the unknown right-middle reward recovered by QVIRL and Valuewalk in the illustrative gridworld.}
    \label{fig:3x3-gridworld-single-rew}
\end{figure}

For a human-readable and easily interpretable illustration of what our method is doing, we ran our method on the simple gridworld in Figure~\ref{fig:3x3-gridworld-single-rew} (left) where the top right state is terminal. We assume the learner knows all the rewards (which we model as a tight normal prior with std 0.1 centred at the true value) except for the right middle tile, where it has a normal prior with mean -1 and std of 10. We assume known deterministic dynamics, expert rationality coefficient $\beta=1.$, and $\gamma=0.8$.

We let the learner observe a single expert demonstration shown in the middle subplot. We are particularly interested in how an apprentice agent would behave in the bottom right cell -- would it be worth cutting through the unknown reward tile (to get to the goal faster), or should it go around? The expert demonstration would be consistent with either being optimal.

The right subplot shows the reward posterior recovered by QVIRL (as well as an MCMC reference posterior recovered using ValueWalk, which produces samples from the true posterior using MCMC). We find that if we use value iteration to find the apprentice policy maximizing the return with respect to the mean reward according to this posterior, the apprentice chooses to go through the unknown-reward tile (with Q-value of 8.0 for going up vs a Q-value of 5.2 of going to the left). On the other hand, if we solve for maximizing the mean minus 2 std of the posterior, representing a risk-averse policy, the Q-value of going up drops to 2.7 and the apprentice would choose to go around.

This illustrates how modelling posterior uncertainty allows us to behave in a risk-averse manner. Furthermore, the plot shows that the reward posterior deduced from the Q-value posterior fits well with the reference reward posterior.
\section{Code, Data, and Compute}
\subsection{Code and Data}
\label{app:code-and-data}
All experiments were run using publicly available libraries and data.
The main software components we used were Python 3.13 (available under the GPL-compatible PSF license agreement; https://docs.python.org/3/license.html), Farama Gymnasium 1.2.3 and Highway Env 1.10.2 (MIT License), and PyTorch 2.9.1 (BSD-3 license).

Our code will be made available on Github at https://github.com/bayesian-reward-learning/qvirl .


\subsection{Computing Resources Used}
\label{app:compute}
Except ATARI, each training was run on a single CPU core (of either an 8-core AMD Ryzen 7 PRO 7840U or Amazon Gravitron 8) and took between 2 and 15 minutes. Each ATARI training run used an NVIDIA V100 GPU and took about 90 minutes for the 200k steps on Pong, and around 7 hours for the 1M steps of Space Invaders.

\end{document}